\newtheorem{theorem}{Theorem}[section]
\newtheorem{lemma}[theorem]{Lemma}
\title {Asynchronous Sharpness-Aware Minimization for Fast and Accurate Deep Learning}  
\author{
 Junhyuk Jo \\
  Inha University\\
\texttt{911whwnsgur@inha.edu} \\
   \And
 Jihyun Lim \\
  Inha University\\
  \texttt{wlguslim@inha.edu} \\
  \And
 Sunwoo Lee \\
  Inha University\\
  \texttt{sunwool@inha.ac.kr} \\
}
\begin{document}
\maketitle
\begin{abstract}
  Sharpness-Aware Minimization (SAM) is an optimization method that improves generalization performance of machine learning models. Despite its superior generalization, SAM has not been actively used in real-world applications due to its expensive computational cost. In this work, we propose a novel asynchronous-parallel SAM which achieves nearly the same gradient norm penalizing effect like the original SAM while breaking the data dependency between the model perturbation and the model update. The proposed asynchronous SAM can even entirely hide the model perturbation time by adjusting the batch size for the model perturbation in a system-aware manner. Thus, the proposed method enables to fully utilize heterogeneous system resources such as CPUs and GPUs. Our extensive experiments well demonstrate the practical benefits of the proposed asynchronous approach. E.g., the asynchronous SAM achieves comparable Vision Transformer fine-tuning accuracy (CIFAR-100) as the original SAM while having almost the same training time as SGD.
\end{abstract}

\maketitle




\section{Introduction}

Sharpness-Aware Minimization (SAM) is a numerical optimization method designed to improve machine learning models' generalization performance \cite{foret2020sharpness}.
Despite its superior generalization performance, SAM has not been actively adopted by real-world applications due to its high computational cost.
Specifically, it calculates gradients twice for a single training iteration -- one for model perturbation (gradient ascent) and another for model update (gradient descent).
To make SAM a more compelling option for deep learning applications, thus, addressing this high computational cost is crucial.

Asynchronous parallel optimization strategies have been proposed in a variety of forms in machine learning literature \cite{chen2016revisiting,meng2016asynchronous,cong2017efficient,zheng2017asynchronous,lian2018asynchronous}.
The common principle behind all these asynchronous SGD methods is to break the data dependency across training iterations to improve the degree of parallelism.
By allowing a certain degree of gradient staleness, they have a freedom to run multiple training iterations simultaneously using different training data without having synchronization costs.
It has been well studied that such an asynchronous scheme still provides practical convergence properties to machine learning algorithms \cite{lian2015asynchronous,zhang2020taming,koloskova2022sharper}.

In this paper, we study how to take advantage of such an asynchronous scheme to break the data dependency between the model perturbation and the model update steps in SAM and thus tackle the expensive model perturbation cost issue.
Specifically, we propose deliberately injecting a limited degree of asynchrony into the gradient ascent to perturb and update the model simultaneously.
Instead of calculating the gradient for the model perturbation using the most recent model parameters, our approach uses slightly out-of-date parameters.
This asynchronous approach makes a practical trade-off between the convergence rate and the system utilization.
In addition, we also propose adjusting the degree of stochasticity in the gradient ascent in a system-aware manner.
In modern computers, it is common to have one or two sockets of CPUs together with a few accelerators like GPUs.
However, most of the existing machine learning studies do not consider utilizing CPU resources for gradient computations.
Our study investigates the possibility of utilizing these CPU resources to perturb the model in SAM by reducing the gradient ascent batch size.
This enables our asynchronous-parallel SAM to entirely hide the model perturbation time behind the model update time.
Putting these two methods together, we design a general optimization algorithm that improves the model's generalization performance virtually not increasing the training time.

We evaluate the efficacy of our asynchronous SAM using many representative deep learning benchmarks such as CIFAR-10, CIFAR-100, Oxford\_Flowers102, Google Speech Command, Tiny-ImageNet, and Vision Transformer fine-tuning.
To validate the applicability of our proposed method, we run our experiments on several realistic computer systems with heterogeneous resources.
We also compare our approach with other SOTA variants of SAM that are designed to reduce the model perturbation cost.
Our extensive experiments demonstrate that, regardless of the heterogeneous system configurations, asynchronous SAM eliminates the model perturbation time during training while maintaining its superior generalization performance.
For instance, asynchronous SAM achieves $92.60\%$ CIFAR-10 accuracy that is almost the same as the original SAM's accuracy, $92.53\%$.
Our method also achieves Vision Transformer fine-tuning accuracy comparable to that of SAM while not increasing the training time.
Therefore, we can conclude that the asynchronous SAM is a practical method that is readily applicable to real-world applications, efficiently improving their generalization performance.

\section {Related Work}

\textbf{Sharpness-Aware Minimization (SAM)} -- SAM is an optimization method that improves the model's generalization performance \cite{foret2020sharpness}.
It calculates the gradient of the loss function and perturbs the model using it first.
Then, the gradient is calculated once more using the perturbed model parameters.
Finally, the model is rolled back to the original parameters and then updated using the gradient computed with the perturbed model.
The parameter update rule of SAM is as follows.
\begin{align}
    w_{t+1} = w_{t} - \eta \nabla L\left( w_{t} + r \frac{\nabla L(w_t)}{\| \nabla L(w_t) \|} \right),
\label{eq:sam}
\end{align}
where $w_t$ is the model parameters at iteration $t$, $\eta$ is the learning rate, and the $r$ is a hyper-parameter that determines how strongly the model is perturbed.
Recently, it has been shown that SAM penalizes the gradient norm resulting in improving the generalization performance \cite{zhao2022penalizing}.

\textbf{Computation-Efficient Variants of SAM} --
There have been a couple of recent works that tackle the expensive model perturbation cost of SAM.
Liu et al. proposed \textit{LookSAM} that reuses the gradients for multiple times to perturb the model parameters \cite{liu2022towards}.
Although this work introduces a promising approach of reusing gradients, LookSAM does not maintain generalization performance as batch size decreases.
Jiang et al. proposed \textit{AE-SAM} that updates the model parameters using either SGD or SAM based on the loss landscape geometry \cite{jiang2023adaptive}.
However, the computational cost of tracking the variance of the entire model's gradient may not be negligible.
Mueller et al. showed that generalization performance can be improved only by perturbing batch normalization layers \cite{mueller2024normalization}.
This method is not generally applicable to the networks without batch normalization layers.
Du et al. proposed \textit{ESAM}, a variant of SAM that perturbs a random subset of model parameters \cite{du2021efficient}.
While this stochastic partial method perturbation method achieves good accuracy, it relies on the selective partial data training which may inject bias into gradient estimations.
\textit{MESA} is a memory-efficient SAM that perturbs the model using a trajectory loss~\cite{du2022sharpness}.
Because this method utilizes a projected approximation of gradient ascent, its performance depends on the inherent complexity of the data patterns, which can easily lead to a loss of model accuracy.
Qu et al. applied SAM to Federated Learning, which demonstrated that local model perturbation can improve global model's generalization \cite{qu2022generalized}.
Unfortunately, these methods do not noticeably reduce the whole computational cost and maintain the generalization performance only when the dynamics of the stochastic gradients remain stable.

\textbf{Neural Network Training on Heterogeneous Systems} --
A few recent studies have explored the opportunities of utilizing heterogeneous systems for training neural networks.
Rapp et al. proposed Distreal, a system-aware federated learning method based on heterogeneous systems \cite{rapp2022distreal}.
Lee et al. proposed EmbracingFL, a partial model training strategy for heterogeneous federated learning \cite{lee2024embracing}.
Liu et al. proposed InclusiveFL, a federated learning framework that adjusts the model width depending on the devices' resource capabilities \cite{liu2022no}.
However, these works only consider distributed learning environments with resource-constrained edge devices.
To the best of our knowledge, there are not many research works that study efficient neural network training exploiting the in-node heterogeneous resources.
Zheng et al. proposed a distributed training method for hybrid (CPU + GPU) systems \cite{zheng2022distributed}.
Although this work discusses the in-node heterogeneous compute resources, they focus only on graph neural networks.
Li et al. studied how to better utilize CPU's large memory space for training neural networks on GPU servers \cite{li2023cotrain}.
This work focused on how to schedule the CPU and GPU workloads to maximize the degree of parallelism.

\section {Method}
We first report and analyze our key observations on how stable the approximated gradients are in typical SGD-based neural network training.
Motivated by this analysis, we then discuss how to take advantage of such stable characteristics of the first-order stochastic gradients to make SAM more efficient.
Specifically, we break the data dependency between the model perturbation and the model update by allowing the gradient ascent with the previous gradients.
In addition, we also discuss how to eliminate the synchronization cost by adjusting the batch size for the gradient ascent in a system-aware manner.
Finally, putting all together, we build up an asynchronous-parallel SAM algorithm and analyze its theoretical performance.

\begin{figure}[t]
\centering
\includegraphics[width=0.9\columnwidth]{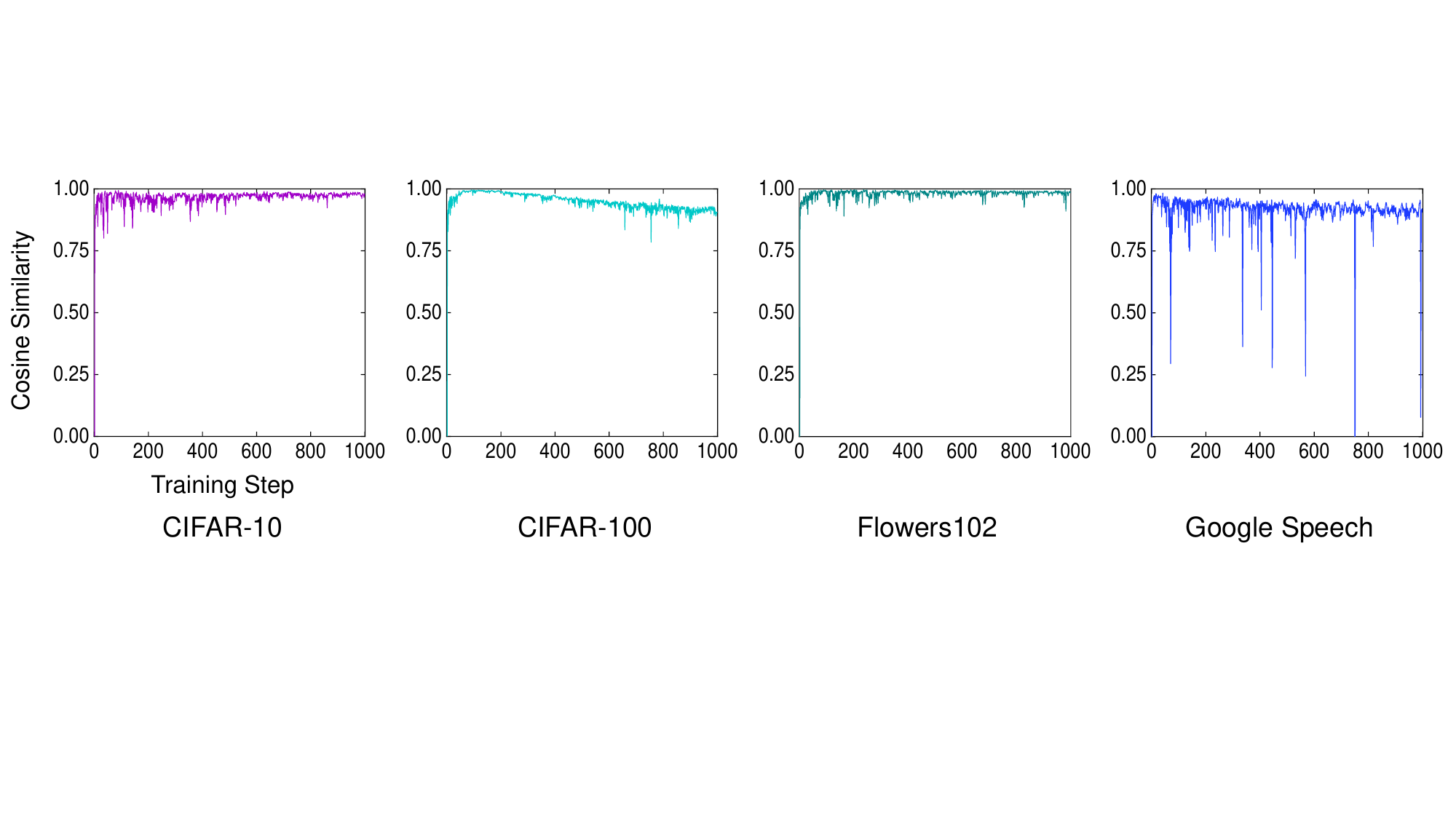}
\caption{
    The cosine similarity between the latest gradient and the previous gradient. The similarities are measured for 1000 consecutive training iterations. All the curves consistently show the high similarity ($> 0.8$). This observation implies that the parameter space with respect to a certain data tends not to dramatically change during training.
}
\label{fig:cossim}
\end{figure}

\subsection {Key Observations on Gradient Stability}
SGD-based neural network training typically goes through a large number of iterations repeatedly calculating the approximated gradients.
Figure~\ref{fig:cossim} shows how much the gradients are changed during training.
We measure the cosine similarity between the latest gradient and the previous gradient, which are computed from the same data samples.
Interestingly, the similarity remains high during the training regardless of which data and model are used.
The similarity value is most likely higher than $0.85$ which indicates that the gradient is quite stable across a few consecutive iterations.
In other words, the parameter space with respect to the training data does not change dramatically during the training.
The similar observation has been reported in different contexts~\cite{azam2021recycling,liu2022towards}.
This result motivates us to utilize the previous gradients for the model perturbation in SAM.
Even if the previous gradient is used to perturb the model, if they are sufficiently similar to the latest one, the SAM will still be expected to improve the model's generalization performance.
We develop an asynchronous-parallel model perturbation method based on this hypothesis regarding the practical SGD dynamics.

\subsection {Asynchronous-Parallel Model Perturbation with Fixed Staleness}
The first step of SAM is to perturb the model through gradient ascent \cite{foret2020sharpness}.
Once the model is perturbed, then the gradient can be computed and the model is finally updated using it.
We propose to break such a data dependency between the model perturbation and the model update steps.
Specifically, we asynchronously parallelize the model perturbation step such that one process calculates the gradient ascent while the other process calculates the gradient descent simultaneously. 
The asynchronous model perturbation can be defined as follows.
\begin{align}
    w_{t+1} = w_{t} -\eta \nabla L_{t} \left( w_{t} + r \frac{\nabla L_{t-\tau}(w_{t-\tau})}{ \| \nabla L_{t-\tau} (w_{t-\tau}) \|} \right). \label{eq:async1}
\end{align}
The key difference between (\ref{eq:async1}) and the original SAM is the degree of asynchrony term $\tau$ on the right-hand side.
Note that, if $\tau = 0$, it becomes the original SAM which can update the model only after the model is perturbed.
The $\tau > 0$ implies that the model is perturbed using the slightly out-of-date gradients.
That is, we can update the model without waiting for the gradient ascent to be computed, which results in eliminating the extra computation time.
Ideally, the training time can be the same as that of SGD that does not have a model perturbation cost.

\textbf{Discussion on Gradient Staleness} --
While the asynchronous model perturbation (\ref{eq:async1}) improves the degree of parallelism by breaking the data dependency, it may hurt the convergence properties if the gradient ascent is too inaccurate.
Specifically, if $\tau$ is large, the gradient $\nabla L_{t-\tau}(w_{t-\tau})$ can be significantly different from the latest gradient $\nabla L_{t}(w_{t})$, and it results in penalizing the gradient norm less effectively.
Thus, the $\tau$ should be sufficiently small to ensure the staled gradient to be similar to the latest one.

Azam et al. empirically investigated how many SGD iterations the stochastic gradients remain similar \cite{azam2021recycling}.
Their results show that the first-order gradients usually remain similar for roughly a decade of iterates.
LookSAM proposed in \cite{liu2022towards} takes advantage of the stability of stochastic gradients such that they reuse the gradient for $k$ iterations where $k$ is a user-tunable hyper-parameter.
However, the stability heavily depends on the training data as well as the learning rate, and thus cannot be simply assumed in advance.
For instance, if the learning rate is large, the gradients will most likely be changed much due to the different loss landscape.
Thus, it is straight-forward that the lower the degree of gradient staleness, the higher the chance of having stable gradient dynamics that is directly connected to the more effective gradient norm penalizing effect.
In the following subsection, we will describe how to minimize $\tau$ without increasing the model perturbation time.

\subsection {Asynchronous-Parallel Model Perturbation on Heterogeneous Systems} \label{sec:batch}
In modern computers, it is common to use accelerators such as GPUs to train machine learning models.
Without loss of generality, we consider a system that has at least two compute resources with a different computing capability (e.g., CPU and GPU).
To enable SAM algorithm to fully utilize such heterogeneous resources while maintaining the convergence properties, we propose assigning the gradient descent on the fast resource while giving the gradient ascent to the slow one.
Under this general setting, the parameter update rule of SAM can be written as follows.
\begin{align}
    w_{t+1} &= w_{t} -\eta \nabla L_{t}^b \left( w_{t} + r \frac{\nabla L_{t-1}^{b'}(w_{t-1})}{ \| \nabla L_{t-1}^{b'} (w_{t-1}) \|} \right), \quad b' \leq b, \label{eq:async2}
\end{align}
where $\nabla L_{t}^{b}(w_{t})$ indicates the mini-batch gradient computed from $b$ random training samples at iteration $t$ and $\nabla L_{t-1}^{b'}(w_{t-1})$ is the mini-batch gradient computed from $b'$ random samples at the previous iteration.

The $b'$ directly affects how much time will be spent to compute the gradient ascent.
In the original SAM, $b'$ is assumed to be the same as $b$.
We propose to adjust $b'$ in a system-aware manner such that $b' = (T_f / T_s ) \times b$.
Here, $T_s$ is the single data gradient calculation time on the slow compute resource and $T_f$ is that on the fast compute resource.
In this way, we can automatically find the maximum batch size $b'$ which makes the gradient ascent time and the gradient descent time nearly the same.
Such gradient ascent time can be entirely hidden behind the gradient descent time by employing a dedicated process.
Note that $b'$ will decrease as the performance difference between the two compute resources increases.

Note that $\tau$ is fixed to a constant $1$.
One might think the $\tau$ could be increased instead of reducing $b'$ to match the gradient descent time and the ascent time.
Many previous works already show that they have the same complexity of the adverse impact on the convergence rate.
The degree of asynchrony $\tau$ has the impact of $\mathcal{O}(\tau)$ on the convergence rate \cite{koloskova2022sharper}.
In the meantime, it has also been widely known that $\textbf{var}(\nabla L^b) \propto b$~\cite{gower2019sgd,smith2017bayesian}.
However, reducing the batch size is more beneficial in terms of system efficiency.
A smaller batch size of gradient ascent, $b'$ reduces not only the computation time but also the memory footprint, while a greater degree of asynchrony only increases the gradient descent time that can overlap the ascent time.
Therefore, we can conclude that the $\tau$ should be fixed to $1$ and the $b'$ should be adjusted accordingly. 

\begin{algorithm}[t]
\caption{
    Asynchronous SAM with system-aware model perturbation.
}
\label{alg:asyncsam}
\begin{algorithmic}[1]
    \STATE{\textbf{Input:} $r$: the ascent scaling factor, $T$: the number of training iterations, $b$: the gradient descent batch size, $b'$: the gradient ascent batch size, $\eta$: the learning rate}
    \FOR{$t \in \{0, \cdots, T-1 \}$}
        \STATE {Asynchronously compute $\nabla L^{b'}_t(w_t)$.}
        \IF{t $\geq$ 1}
            \STATE {Perturb the model: $\hat{w}_t = w_t + r \frac{\nabla L_{t-1}^{b'}(w_{t-1})}{\| \nabla L_{t-1}^{b'}(w_{t-1}) \|}$.}
            \STATE {Compute gradient: $g_t = \nabla L_t^b(\hat{w}_t)$.}
        \ELSE
            \STATE {Compute gradient: $g_t = \nabla L_t^b(w_t)$.}
        \ENDIF
        \STATE {Update model: $w_{t+1} = w_t -\eta g_t$.}
    \ENDFOR
    \STATE{\textbf{Output:} $w_{T}$}
\end{algorithmic}
\end{algorithm}

\subsection {Asynchronous SAM Algorithm}

We build up an asynchronous-parallel SAM by putting the aforementioned two methods together.
Figure \ref{fig:schematic} shows a few schematic illustrations that support the proposed method.
First, Figure \ref{fig:schematic}.a shows a schematic illustration of two consecutive SAM iterations on the parameter space projected on 2-D space.
The two gradient ascent steps perturb the model toward almost the same direction.
This is aligned with our observation shown in Figure~\ref{fig:cossim}.
Figure \ref{fig:schematic}.b shows the schematic illustration of the proposed method.
Instead of the latest gradients (red arrows), our method uses $\tau = 1$ iteration-old gradients (blue arrows) for the model perturbation.
Although the gradients are staled, they are still similar to the latest gradients.
Thus, the model is expected to move toward a similar direction resulting in reaching the same flat minimum.

Algorithm~\ref{alg:asyncsam} shows the proposed asynchronous SAM that corresponds to Figure~\ref{fig:schematic}.b.
The asynchronous gradient computation at line 3 indicates the asynchronous parallel computation of the gradient for the model perturbation.
Since there is no gradient for the model perturbation at the very first iteration, it runs without the perturbation.
After that, the model is perturbed using $\nabla L^{b'}_{t-1}(w_{t-1})$, the gradient computed asynchronously using 1-iteration old model parameters and $b'$ random data samples.
Note that Algorithm~\ref{alg:asyncsam} fixes $\tau = 1$.
That is, the degree of staleness in the gradient ascent is minimized and it keeps the noise scale in the stochastic gradients from being too much increased when using a smaller batch size ($b' < b$).

\begin{figure}[t]
\centering
\includegraphics[width=0.9\columnwidth]{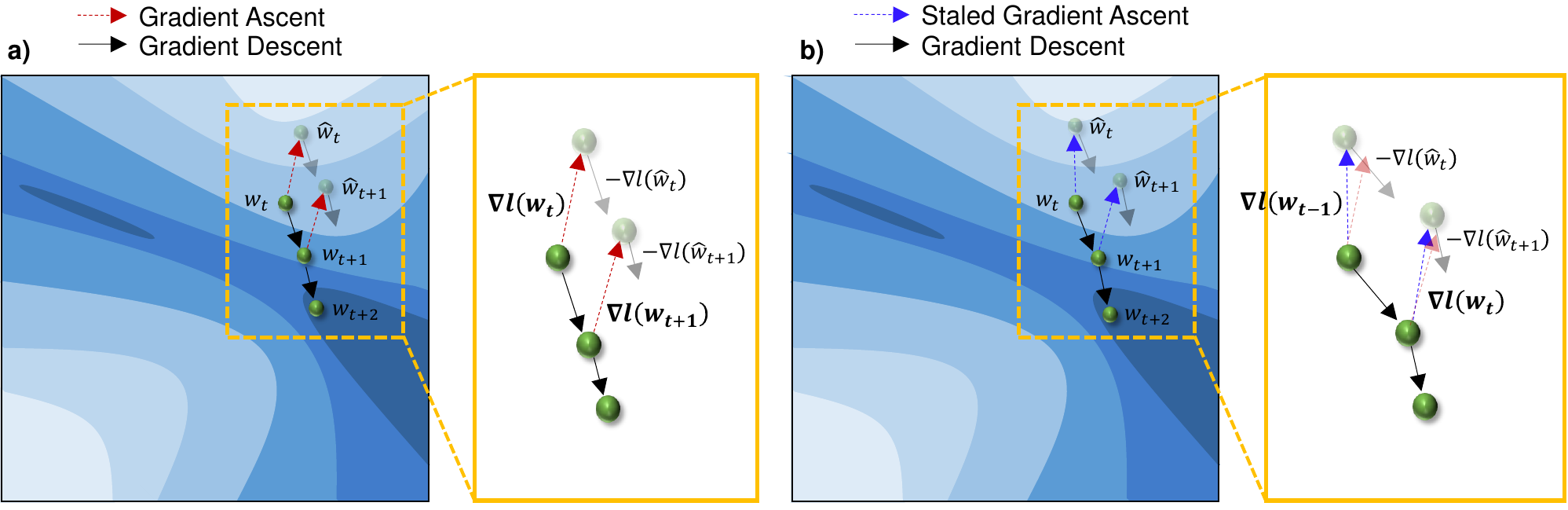}
\caption{
    \textbf{a}: The schematic illustration of SAM. The model is updated twice using Eq.~\ref{eq:sam}.
    The two consecutive gradients, $\nabla l(w_t)$ and $\nabla l(w_{t+1})$ are observed to be quite similar to each other (See Fig.~\ref{fig:cossim}).
    \textbf{b}: The schematic illustration of the proposed asynchronous SAM. Instead of using the latest gradient for the model perturbation, the staled gradients, $\nabla l(w_{t-1})$ and $\nabla l(w_t)$ are used to perturb $w_t$ and $w_{t+1}$, respectively.
    When $\tau$ is sufficiently small, the staled gradients are similar to the latest gradients and thus the model is expected to move toward the same minimum.
}
\label{fig:schematic}
\end{figure}

\subsection {Convergence Analysis}
Here we analyze the convergence behavior of the proposed asynchronous SAM with a system-aware batch size for the model perturbation.
Our goal is to show that the proposed method provides a minimum performance guarantee and thus is safely applicable to real-world applications.
Our analysis follows the conventional assumptions as follows.
\\
\textbf{Assumption 1.} (Smoothness): There exists a constant $\beta \geq 0$ such that $\| \nabla l(u) - \nabla l(v) \| \leq \beta \| u - v \| \mathrm{ , } \forall u, v \in \mathbb{R}^d$.
\\
\textbf{Assumption 2.} (Bounded Variance): There exists a constant $\sigma \geq 0$ such that $\mathbb{E}_{\xi_i \in D}[ \| \nabla l(w, \xi_i) - \nabla L(w) \|^2] \leq \sigma^2$.
\\
\textbf{Assumption 3.} (Bounded Norm): There exists a constant $G \geq 0$ such that $\| \nabla l(w, \xi_i) \|^2 \leq G^2 \mathrm{ , } \forall i \in [n]$, where $n$ is the number of total data and $\xi_i$ is a data sample.

Then, the convergence rate of Algorithm \ref{alg:asyncsam} is analyzed as follows. The proof is provided in Appendix.
\begin{theorem}
    Assume the $\beta$-smooth non-convex loss function and the bounded gradient variance and norm. Then, if $\eta \leq \frac{1}{\beta}, Algorithm \ref{alg:asyncsam} satisfies:$
\begin{align}
        \frac{1}{T} \sum_{t=0}^{T-1} \mathbb{E} \left[ \| \nabla L(w_t) \|^2 \right] \leq \frac{2}{T\eta} \left(L(w_0) - \mathbb{E}\left[ L(w_T) \right] \right) + \left( \frac{2\beta^2r^2}{b'} + \frac{4\beta^2 r^2 + \eta\beta}{b} \right) \sigma^2 + 4\beta^2r^2 G^2 \label{eq:theorem1}.
\end{align}
\end{theorem}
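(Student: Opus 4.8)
The plan is to run the classical non-convex descent-lemma argument, treating the update $w_{t+1}=w_t-\eta g_t$ with $g_t=\nabla L^b_t(\hat w_t)$, $\hat w_t=w_t+r\,\nabla L^{b'}_{t-1}(w_{t-1})/\|\nabla L^{b'}_{t-1}(w_{t-1})\|$, as one step of a biased, noisy stochastic gradient method in which the bias comes solely from evaluating the gradient at the perturbed iterate $\hat w_t$ rather than at $w_t$. First I would fix the filtration $\mathcal F_t$ generated by all randomness through iteration $t-1$, so that $w_t$ and $\hat w_t$ are $\mathcal F_t$-measurable and the only fresh randomness at step $t$ is the descent mini-batch; then I would split $g_t=\nabla L(w_t)+d_t+n_t$ into the true gradient, the perturbation bias $d_t:=\nabla L(\hat w_t)-\nabla L(w_t)$ (which is $\mathcal F_t$-measurable), and the zero-mean noise $n_t:=\nabla L^b_t(\hat w_t)-\nabla L(\hat w_t)$, so that $\mathbb E[n_t\mid\mathcal F_t]=0$ and Assumption~2 gives $\mathbb E[\|n_t\|^2\mid\mathcal F_t]\le\sigma^2/b$.

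Next I would apply Assumption~1 to obtain $L(w_{t+1})\le L(w_t)-\eta\langle\nabla L(w_t),g_t\rangle+\tfrac{\beta\eta^2}{2}\|g_t\|^2$, take the conditional expectation, and combine the cross term with the quadratic term using the identity $-\eta\langle a,b\rangle+\tfrac{\eta}{2}\|b\|^2=\tfrac{\eta}{2}\|b-a\|^2-\tfrac{\eta}{2}\|a\|^2$ with $a=\nabla L(w_t)$, $b=\nabla L(\hat w_t)$; the step-size condition $\eta\le 1/\beta$ enters exactly here, to bound $\tfrac{\beta\eta^2}{2}$ by $\tfrac{\eta}{2}$. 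After taking full expectations this yields a one-step recursion of the shape $\mathbb E[L(w_{t+1})]\le\mathbb E[L(w_t)]-\tfrac{\eta}{2}\mathbb E\|\nabla L(w_t)\|^2+\tfrac{\eta}{2}\mathbb E\|d_t\|^2+(\text{a noise term of order }\eta^2\beta\sigma^2/b)$; the $\tfrac{2}{T\eta}$ prefactor in the theorem is precisely the cost of carrying only $\tfrac{\eta}{2}\|\nabla L(w_t)\|^2$, and the $\tfrac{\eta\beta}{b}\sigma^2$ summand is the usual SGD noise contribution after telescoping.

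The crux is controlling the perturbation-bias term $\mathbb E\|d_t\|^2$, which by smoothness is at most $\beta^2\,\mathbb E\|\hat w_t-w_t\|^2$. Tracing this back through the ascent step to the second moment of the stale ascent mini-batch, $\mathbb E\|\nabla L^{b'}_{t-1}(w_{t-1})\|^2\le\|\nabla L(w_{t-1})\|^2+\sigma^2/b'\le G^2+\sigma^2/b'$ (Assumptions~2 and~3), is what makes $G$ and the $1/b'$-dependence appear; the $\beta^2 r^2$-weighted terms of the theorem --- $\tfrac{2\beta^2r^2}{b'}\sigma^2$, $\tfrac{4\beta^2r^2}{b}\sigma^2$, and $4\beta^2r^2G^2$ --- then emerge from this bound after the various Young and triangle-inequality splits, which also pin down the numerical constants. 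Finally I would sum the one-step recursion over $t=0,\dots,T-1$ --- treating $t=0$, a plain SGD step with no perturbation, separately since it only tightens the bound --- telescope $\sum_t\big(\mathbb E[L(w_t)]-\mathbb E[L(w_{t+1})]\big)=L(w_0)-\mathbb E[L(w_T)]$, and divide by $T$.

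I expect the main obstacle to be exactly this perturbation-bias step and its bookkeeping: one must verify that $d_t$ carries no fresh step-$t$ randomness (so the $\langle n_t,d_t\rangle$ cross term drops in expectation), decide how the normalized ascent direction is traced back to the ascent gradient's statistics so that its variance ($\propto 1/b'$) and norm bound ($G$) surface with the stated coefficients, and confirm that the condition $\eta\le 1/\beta$ alone suffices --- which is why the cross-term/quadratic identity above is preferable to a naive Young's inequality against $\|\nabla L(w_t)\|^2$, the latter costing either a smaller admissible step size or a worse $\tfrac{4}{T\eta}$ prefactor.
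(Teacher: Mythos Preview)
Your plan is essentially the paper's own argument: apply the smoothness descent inequality, rewrite the cross term via the polarization identity so that $-\eta\langle\nabla L(w_t),\nabla L(\hat w_t)\rangle+\tfrac{\eta}{2}\|\nabla L(\hat w_t)\|^2=\tfrac{\eta}{2}\|d_t\|^2-\tfrac{\eta}{2}\|\nabla L(w_t)\|^2$, use $\eta\le 1/\beta$ to absorb the quadratic piece, bound $\|d_t\|^2\le\beta^2\|\hat w_t-w_t\|^2$, and telescope.

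Two small points of comparison. First, your direct bound $\mathbb E\|\nabla L^{b'}_{t-1}(w_{t-1})\|^2\le\sigma^2/b'+G^2$ is \emph{tighter} than what the paper does: the paper splits twice --- once against $\nabla L(w_{t-1})$ (producing $2\beta^2r^2\sigma^2/b'$) and then, somewhat gratuitously, against the descent mini-batch gradient $\nabla L^b_t(w_{t-1})$ (producing the extra $4\beta^2r^2\sigma^2/b$) before invoking $\|\nabla L^b_t(w_{t-1})\|^2\le G^2$. So your outline would in fact prove a sharper inequality; to recover the paper's specific $2/b'$, $4/b$, $4G^2$ constants you would have to mimic this looser double split. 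Second, your worry about the normalized ascent direction is apt: with normalization one has $\|\hat w_t-w_t\|=r$ exactly, so $\|d_t\|^2\le\beta^2r^2$ and neither $G$ nor $1/b'$ would surface. The paper's proof silently analyses the \emph{un}normalized perturbation $w_{t+1/2}=w_t+r\nabla L^{b'}_{t-\tau}(w_{t-\tau})$, which is what makes the stated constants appear.
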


\textbf{Remark 1.} \textit{Convergence guarantee}: For non-convex smooth problems, Algorithm \ref{alg:asyncsam} guarantees a convergence regardless of which $b'$ is used and how large $\tau$ is.
When carefully choosing the learning rate $\eta$, the right-hand side can be minimized except for $\frac{2\beta^2 r^2}{b'}$, $\frac{4\beta^2 r^2}{b}$, and $4\beta^2 r^2 G^2$ terms.
That is, the loss will converge to a $\mathcal{O}(\beta^2 r^2 G^2)$ neighborhood of the minimum.
Although it does not ensure convergence to an exact minimum, this rough guarantee is acceptable and can be considered useful in real-world applications~\cite{zhang2021understanding,goh2017momentum}.

\textbf{Remark 2.} \textit{Impact of having asynchronous gradient ascent and $b' \leq b$}: As the batch size $b'$ decreases, the variance term on the right-hand side increases making the training loss converge more slowly.
That is, if the two compute resources for the model perturbation and the model update have a significantly different capacity, the convergence rate will be more adversely affected.
In addition, the right-most term on the right-hand side indicates the penalty of allowing asynchronous gradient ascent.
As expected, it will converge more slowly as compared to the original SAM.
However, the proposed method enables the gradient ascent time to be entirely hidden behind the gradient descent time.
This trade-off between the slower convergence and the shorter end-to-end wall-clock timing will be empirically analyzed in Section \ref{sec:eval}.

\section {Performance Evaluation} \label{sec:eval}

\subsection {Experimental Settings}
To evaluate the performance of our proposed asynchronous SAM, we collect and analyze the classification performance of CIFAR-10 \cite{krizhevsky2009learning} (ResNet20 \cite{he2016deep}), CIFAR-100 (Wide-ResNet28 \cite{zagoruyko2016wide}), Oxford\_Flowers102 \cite{Nilsback08} (Wide-ResNet16), Google Speech Command~\cite{warden2018speech} (CNN), and Tiny-ImageNet~\cite{Le2015TinyIV} (ResNet50).
We also analyze the CIFAR-100 fine-tuning performance using ImageNet-pretrained Vision Transformer (ViT) \cite{dosovitskiy2020image}.
The training software is implemented using TensorFlow 2.11.0, and the experiments are conducted on several GPU machines with different CPU resources.
We implement the asynchronous-parallel SAM using MPI such that two processes run in parallel, one performs the gradient ascent and the other performs the gradient descent simultaneously.
The hyper-parameter settings for all individual experiments are provided in Appendix.
All individual accuracy values are obtained by running at least three independent experiments.

\subsection {Classification Performance}
\textbf{Accuracy Comparison} -- We first show the classification accuracy comparison in Table \ref{tab:classification}.
We compare our method to several SOTA methods (Generalized SAM, ESAM, LookSAM, MESA, and AE-SAM) as well as the conventional SGD and the original SAM.
SGD indicates mini-batch SGD with the best-tuned batch size.
SAM is the vanilla SAM \cite{foret2020sharpness} that updates the model only using the gradients computed with the perturbed model.
Generalized SAM is a variant of SAM which updates the model using both the gradient ascent and descent \cite{zhao2022penalizing}.
ESAM is a variant of SAM that takes advantage of the partial model perturbation and a biased gradient estimator~\cite{du2021efficient}.
LookSAM is another variant of SAM that reuses the gradient ascent \cite{liu2022towards}.
MESA is a memory-efficient version of SAF(Sharpness-Aware Minimization for Free) that perturbs the model using a trajectory loss \cite{du2022sharpness}.
AE-SAM is also another variant of SAM that runs SGD and SAM interchangeably based on the gradient norm \cite{jiang2023adaptive}.

Overall, our proposed method achieves clearly higher accuracy than SGD.
In all the benchmarks, our method's accuracy is comparable to the generalized SAM which achieves the best validation accuracy among all the SOTA methods.
A few SOTA methods achieve much lower accuracy than the original SAM (e.g., MESA in CIFAR-100 and LookSAM in Google Speech experiments) while our method consistently achieves much higher than that.
These comparisons empirically prove that the proposed asynchronous-parallel model perturbation method maintains the SAM's superior generalization performance even though the data dependency between the model perturbation and the model update is broken.
It also validates that the asynchronous SAM works well not only for ResNet series with a moderate size but also for a large-scale Vision Transformer (ViT-b16).

\begin{table}[t]
\scriptsize
\centering
\begin{tabular}{lcccccc} \toprule
\multirow{2}{*}{Algorithm} & CIFAR-10 & CIFAR-100 & Oxford Flowers102 & Google Speech & CIFAR-100 & Tiny-ImageNet \\
 &  (ResNet20) & (WRN-28) & (WRN-16) & (CNN) & (ViT Fine-Tuning) & (ResNet50) \\ \midrule
SGD & $91.96 \pm 0.3\%$ & $79.47 \pm 0.5\%$ & $69.82 \pm 0.2\%$ & $95.96 \pm 0.4\%$ & $91.20 \pm 0.1\%$ & $61.38 \pm 0.1\%$ \\
SAM & $92.53 \pm 0.4\%$ & $80.13 \pm 0.4\%$ & $73.87 \pm 0.6\%$ & $97.45 \pm 0.5\%$ & $92.26 \pm 0.1\%$ & $64.68 \pm 0.1\%$ \\
Generalized SAM \cite{zhao2022penalizing} & $92.64 \pm 0.4\%$ & $80.83 \pm 0.5\%$ & $74.77 \pm 0.6\%$ & $98.74 \pm 0.1\%$ & $92.41 \pm 0.1\%$ & $64.77 \pm 0.3\%$ \\
ESAM \cite{du2021efficient} & $92.59\pm0.2 \%$ & $80.33 \pm 0.1\%$ & $74.18 \pm 1.8\%$ &$ 96.53 \pm 0.2\%$& $92.08 \pm 0.1\%$ & $63.84 \pm 0.1\%$ \\
LookSAM \cite{liu2022towards} & $92.42 \pm 0.3\%$ & $80.13 \pm 0.7\%$ & $73.63 \pm 0.5\%$ & $93.94 \pm 0.2\%$ & $91.93 \pm 0.1\%$ & $61.66 \pm 0.1\%$ \\
MESA \cite{du2022sharpness} & $92.22 \pm 0.2\%$ & $79.60 \pm 0.3\%$ & $74.41 \pm 0.4\%$ &$96.05 \pm 0.4\%$ & $91.35 \pm 0.3\%$ & - \\
AE-SAM \cite{jiang2023adaptive} & $92.60 \pm 0.2\%$ & $80.09 \pm 0.3\%$ & $73.97 \pm 0.7\%$ & $97.38 \pm 0.9\%$ & $92.01 \pm 0.1\%$ & $63.82 \pm 0.1\%$ \\
\textbf{AsyncSAM (proposed)} & $92.60 \pm 0.2\%$ & $80.67 \pm 0.5\%$ & $77.70 \pm 1.1\%$ &$97.64 \pm 1.2\%$& $91.87 \pm 0.1\%$ & $62.50 \pm 0.1\%$ \\ \bottomrule
\end{tabular}
\caption{
    The classification performance comparisons. Most of the methods outperform the baseline SGD, achieving the near accuracy to SAM. Our proposed method, AsyncSAM, shows almost the same accuracy as that of Generalized SAM which achieves the best accuracy in all the benchmarks.
}
\label{tab:classification}
\end{table}

One interesting result is that our method achieves significantly higher Oxford\_Flowers102 accuracy than SAM.
This improvement is not observed in any other benchmarks.
We believe this performance gap comes from a relatively small batch size ($b=40$) and the dataset's inherent characteristics.
Finding out the specific conditions where asynchronous SAM achieves better accuracy will be crucial future work.
When the model size is large, e.g., ResNet50, MESA could not successfully run on our systems due to its big memory footprint.
It keeps a moving average of all the model parameters and even performs knowledge distillation using two separate models, which makes it less practical.

\begin{wrapfigure}{r}{7cm}
\includegraphics[width=7cm]{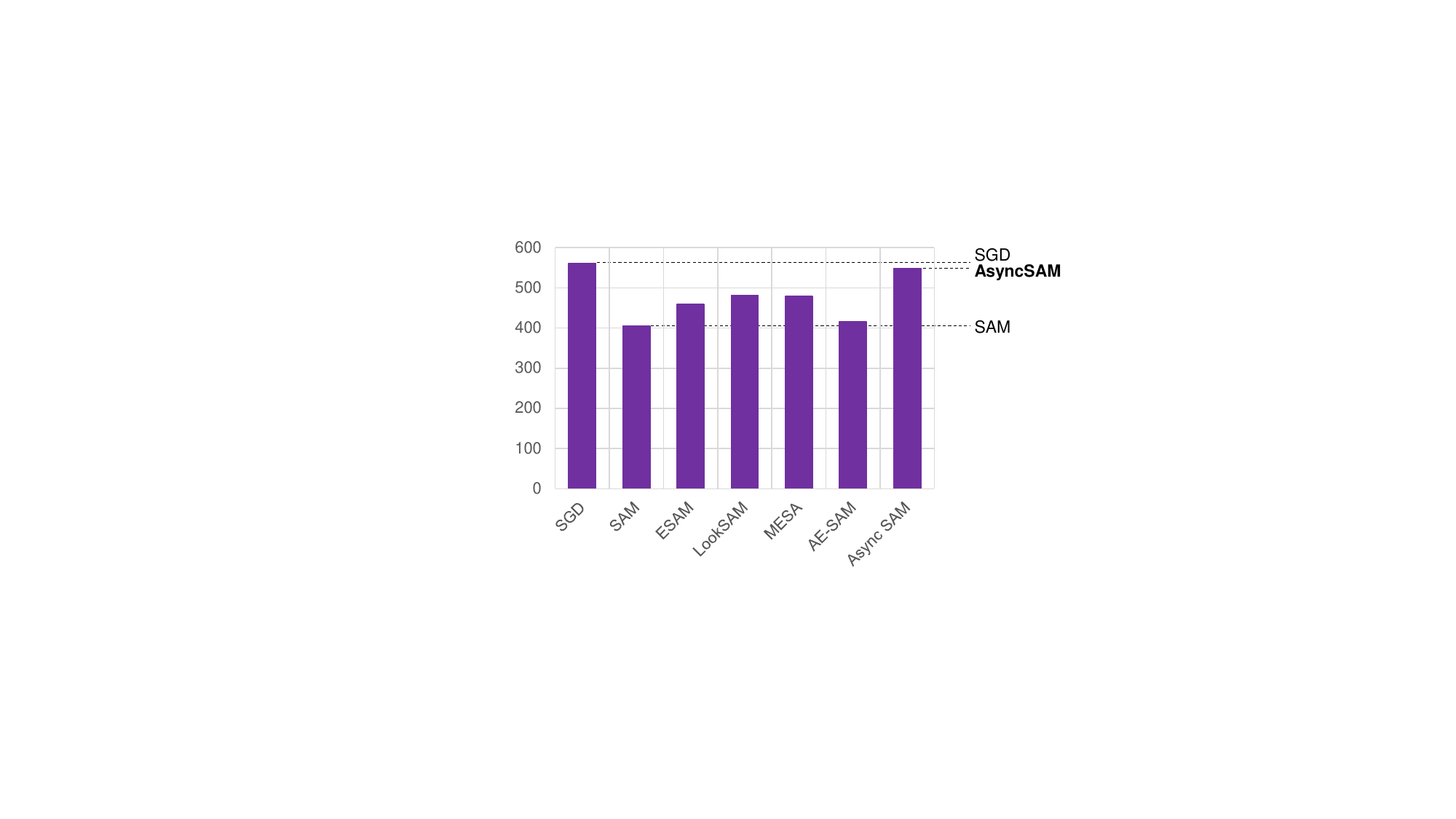}
\caption{The CIFAR-10 training throughput Comparison (images/sec).}
\label{fig:throughput}
\end{wrapfigure}

\textbf{Throughput Comparison} -- Figure \ref{fig:throughput} shows the CIFAR-10 (ResNet20) training throughput comparison.
The mini-batch size is set to 128 and the timing is averaged across at least 3 epochs.
As expected, the original SAM shows the lowest throughput (images / sec) among all the methods.
We omit Generalized SAM because it has virtually the same throughput as the original SAM (It only has one additional hyper-parameter).
LookSAM, ESAM, and MESA achieve almost the same throughput that is higher than that of SAM.
We clearly see that asynchronous SAM achieves remarkably higher throughput than these SOTA methods.
The performance gain mostly comes from the asynchronous-parallel model perturbation method that nearly eliminates the extra computation time.
Together with the model accuracy comparison shown in Table~\ref{tab:classification}, this result verifies that asynchronous SAM maintains the model accuracy while significantly improving the system efficiency of SAM.


\textbf{Time vs. Accuracy} --
To demonstrate the efficacy of our method, we show CIFAR-10 time-vs-accuracy curve comparison in Figure \ref{fig:curves}.
Because the original SAM and the generalized SAM have exactly the same computational cost, we only show the generalized SAM which achieves higher accuracy.
First, as expected, the generalized SAM takes the longest time to finish the same number of training epochs (150 in total).
When using AE-SAM, SAM steps take up roughly a half of the total steps and thus it still takes much more time than SGD.
It noticeably makes the training loss curve unstable, but we do not see any validation accuracy drop.
We found that LookSAM significantly loses the accuracy when the gradient re-calculation interval is set to be larger than $2$.
Thus, we fix it to $2$ and collected the curves.
Under this setting, we see that LookSAM takes a similar amount of time as AE-SAM.
Finally, our proposed asynchronous SAM takes almost the same amount of time as SGD while achieving comparable accuracy to the generalized SAM.

One interesting observation is that, while all the other variants of SAM show a slower convergence of the loss, asynchronous SAM converges even faster than SGD.
We observed the same tendency across all the four benchmarks.
Such an unexpected acceleration may be caused by the staleness of the gradient ascent.
We consider the lack of understanding about this symptom as a limitation of this study and believe that analyzing this result and understanding the root cause will be intriguing future work.

\begin{figure}[t]
\centering
\includegraphics[width=0.95\columnwidth]{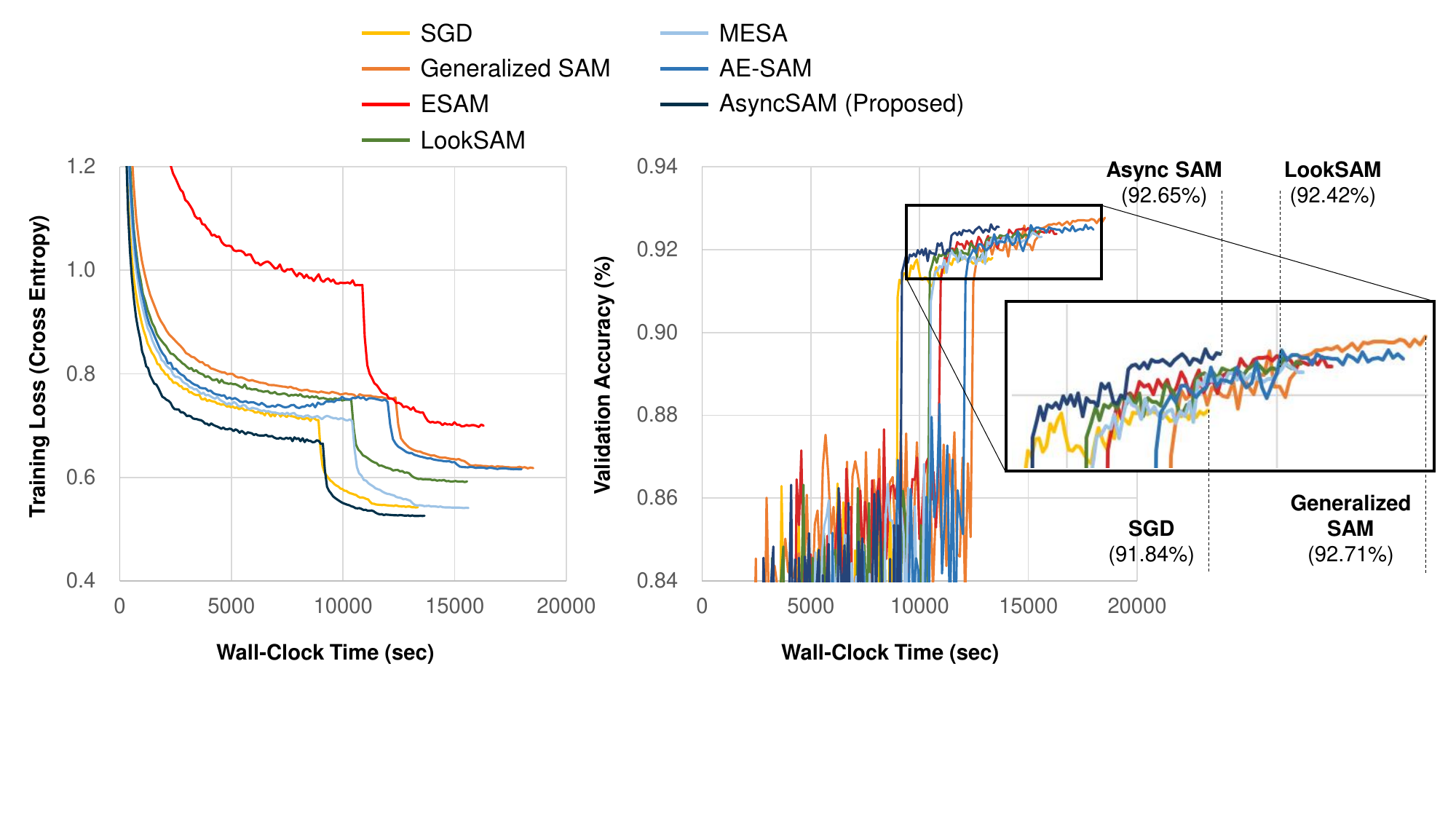}
\caption{
    The CIFAR-10 learning curve comparison. Our proposed asynchronous SAM achieves nearly the same accuracy as the generalized SAM~\cite{zhao2022penalizing} while taking a similar amount of time as SGD.
    The generalized SAM achieves the best accuracy, however, it takes much more time than other methods.
    Interestingly, asynchronous SGD also achieves a training loss slightly lower than that of SGD.
}
\label{fig:curves}
\end{figure}

\begin{table}[t]
\footnotesize
\centering
\label{tab:timing}
\begin{tabular}{lllrrr} \toprule
Benchmark & Grad. Ascent & Grad. Descent & $\frac{b}{b'}$ & Epoch time & Valid. Accuracy \\ \midrule
\multirow{5}{*}{CIFAR-10} & NVIDIA A6000 & NVIDIA A6000 & $1\times$ & $91 \pm 3$ sec & $92.56 \pm 0.3\%$ \\
& AMD EPYC 7452 & NVIDIA A6000 & $5\times$ & $93 \pm 3$ sec & $92.45 \pm 0.4\%$ \\ \cmidrule{2-6}
& NVIDIA RTX 4060 & NVIDIA RTX 4060 & $1 \times$ & $19 \pm 1$ sec & $92.60 \pm 0.6\%$\\
& Intel i9-13900HX & NVIDIA RTX 4060 & $3\times$ & $20 \pm 1$ sec & $92.53 \pm 0.5\%$ \\ 
& Intel i7-12650H & NVIDIA RTX 4060 & $4\times$ & $20 \pm 2$ sec & $92.47 \pm 0.5\%$ \\ \midrule
\multirow{5}{*}{Oxford\_Flowers102} & NVIDIA A6000 & NVIDIA A6000 & $1\times$ & $52 \pm 3$ sec & $76.49 \pm 1.1\%$ \\
& AMD EPYC 7452 & NVIDIA A6000 & $5\times$ & $53 \pm 2$ sec & $74.35 \pm 0.5\%$ \\ \cmidrule{2-6}
& NVIDIA RTX 4060 & NVIDIA RTX 4060 & $1 \times$ & $17 \pm 1$ sec & $76.50 \pm 0.8\%$ \\
& Intel i9-13900HX & NVIDIA RTX 4060 & $3\times$ & $18 \pm 1$ sec & $76.55\pm 0.3\%$ \\ 
& Intel i7-12650H & NVIDIA RTX 4060 & $4\times$ & $18 \pm 1$ sec & $74.24 \pm 0.3\%$\\ \bottomrule
\end{tabular}
\caption{
    The classification performance of Asynchronous SAM on the heterogeneous system resources. The accuracy is consistently improved over SGD regardless of the system capabilities.
}
\end{table}

\subsection {Performance on In-Node Heterogeneous Resources}
We further support the efficacy of the proposed method by analyzing its timing and accuracy on realistic heterogeneous systems.
Table \ref{tab:timing} shows the average epoch time and the validation accuracy.
The column $\frac{b}{b'}$ indicates the ratio of the gradient descent batch size to the gradient ascent batch size.
This value is estimated from the average iteration time in advance.
First, the accuracy is consistently improved over SGD regardless of the performance gap between the in-node compute resources (See Table \ref{tab:classification} for the SGD accuracy for both benchmarks).
For instance, when the gradient ascent with CIFAR-10 is performed on AMD EPYC 7452, it is roughly 5 times slower than the gradient descent on NVIDIA A6000.
Thus, we set $b' = 26$ while $b = 128$, and asynchronous SAM still achieves $92.45 \pm 0.4\%$ accuracy that is much higher than the SGD accuracy, $91.96 \pm 0.3\%$.
For Oxford\_Flowers102, $b=40$ and $b'$ is set to $14$ on Intel i9 and $10$ on Intel i7 depending on the performance gap.
The accuracy is not dropped when $\frac{b}{b'} \leq 3 \times$, and it starts to slightly drop when the ratio goes beyond $3\times$.
However, when the ratio is $5 \times$, it achieves accuracy of $74.35 \pm 0.5\%$ that is still significantly higher than that of SGD, $69.82 \pm 0.2\%$.
These results empirically prove that asynchronous SAM enables users to exploit the in-node heterogeneous system resources to improve the generalization performance.

\begin{figure}[t]
\centering
\includegraphics[width=0.95\columnwidth]{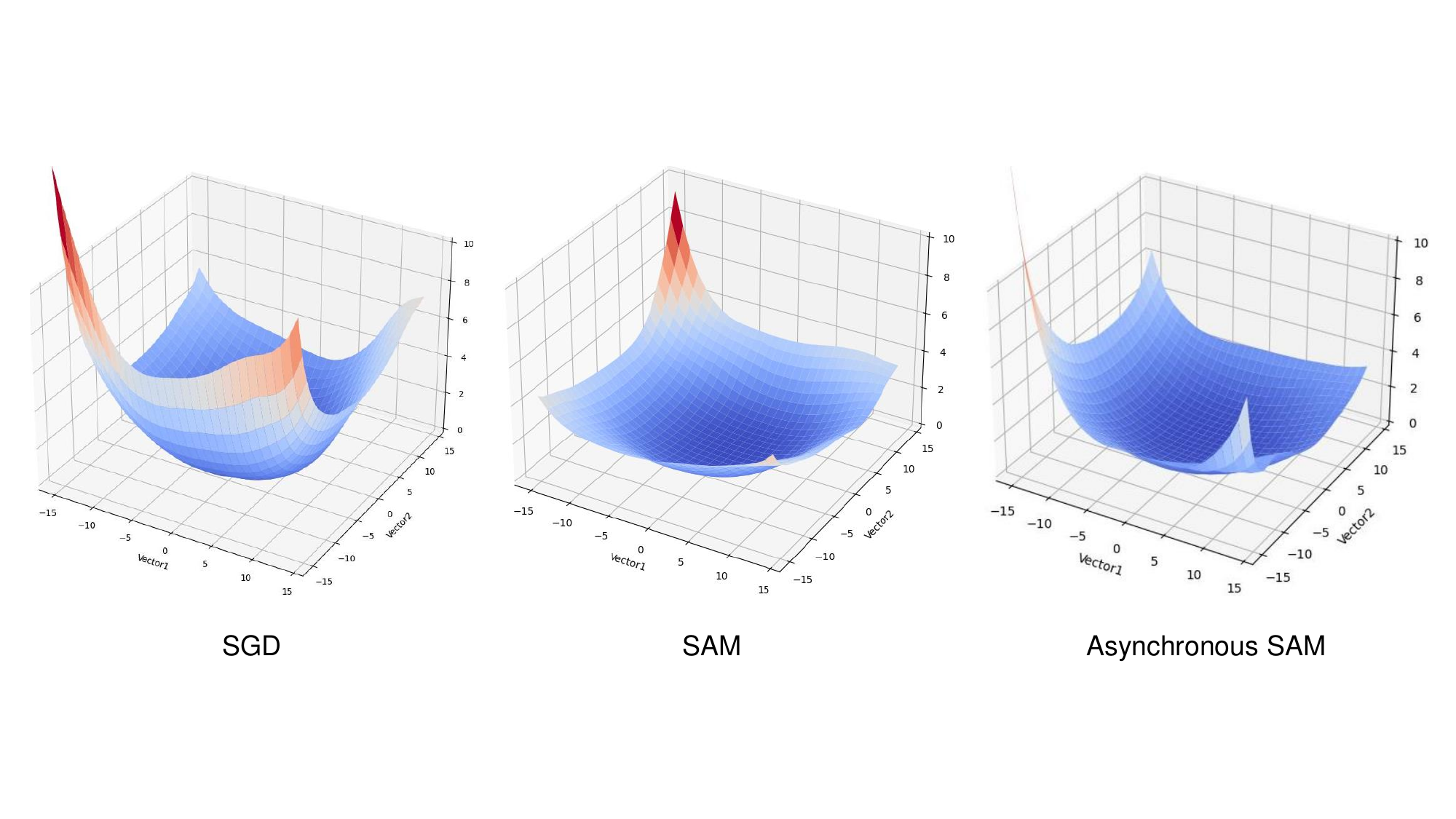}
\caption{
    The CIFAR-10 loss landscape comparison. The maximum z-axis is fixed to $10$. We clearly see that asynchronous SAM leads the model to a flat region similarly to the original SAM. The flatter the loss landscape, the better the generalization performance.
}
\label{fig:landscape}
\end{figure}


\subsection {Visualized Loss Landscape Comparison}
We visualize the loss landscape and compare it across different SAM variants.
The parameter space is visualized using the projection technique shown in \cite{li2018visualizing}.
We collected the loss from $30 \times 30$ different points and visualized the loss landscape.
To better visualize the difference, we calculated the loss directly using the logits instead of the softmax outputs.
Figure \ref{fig:landscape} shows the loss landscape comparison.
It is clearly shown that SAM makes the model converge to a flatter region than SGD.
Likewise, asynchronous SAM shows a flatter loss landscape as compared to SGD.
The corresponding validation accuracy of SGD, SAM, and asynchronous SAM are $91.83\%$, $92.61\%$, and $92.51\%$, respectively.
The flatness of the loss landscape is well aligned with the achieved validation accuracy.
Therefore, we can conclude that the proposed asynchronous SAM effectively leads the model to a flat region of the parameter space likely to SAM, achieving the superior generalization performance.

\section {Conclusion}
Our study shows that the SAM's model perturbation time can be hidden behind the model update time by allowing a limited degree of asynchrony for the gradient ascent and carefully adjusting its batch size.
Especially, it is shown that the gradient ascent computed with one-iteration-old model parameters is sufficiently accurate so that the model's gradient norm is well penalized.
This novel approach enables users to exploit heterogeneous system resources such as CPU and GPU to accelerate the neural network training with SAM.
Our extensive experiments well demonstrate that the proposed method effectively reduces the training time while maintaining the superior generalization performance of SAM.
In modern computers, it is common to have on CPU and multiple GPUs.
Since the proposed asynchronous-parallel approach is orthogonal to the gradient recycling approach, they can be harmonized to better utilize such extremely heterogeneous system resources.
Further scaling up the proposed asynchronous SAM based on the well-studied data-parallelism should be intriguing future work.

\textbf{Limitations and Future Work} --
Our experimental results show that the asynchronous SAM loses the accuracy when the gradient ascent batch size is too small.
The batch size is determined by the in-node system capacity, e.g., the performance gap between CPU and GPU.
However, we verified that the accuracy drop is almost negligible when the performance gap is reasonably small.
Adjusting the degree of asynchrony and the degree of stochasticity together to tackle this inaccurate gradient ascent issue will be interesting future work.
In addition, our theoretical analysis is based on the bounded gradient norm assumption.
We plan to improve our analysis to get rid of the strong assumption.

\section*{Acknowledgment}
This work was partly supported by Institute of Information \& communications Technology Planning \& Evaluation (IITP) grant funded by the Korea government(MSIT) (No.RS-2022-00155915, Artificial Intelligence Convergence Innovation Human Resources Development (Inha University)) and the National Research Foundation of Korea(NRF) grant funded by the Korea government(MSIT) (No.RS-2024-00452914). This research was supported by the BK21 Four Program  funded by the Ministry of Education(MOE, Korea) and National Research Foundation of Korea(NRF).


\bibliographystyle{cas-model2-names}

\bibliography{mybib}

\clearpage 

\appendix
\section {Appendix}
\subsection {Hyper-Parameter Settings}
The hyper-parameters settings used in Table \ref{tab:classification} are shown in Table \ref{tab:setting1}.
In addition, the optimizer-specific settings are shown in Table \ref{tab:setting2}.
We believe the experiments conducted in this study can be all precisely reproduced using the above Table \ref{tab:setting1} and \ref{tab:setting2}.
If a setting is written like $x \sim y$, we choose the best one between $x$ and $y$ depending on the benchmarks.

\begin{table}[h!]
\footnotesize
\centering
\caption{
    The hyper-parameter settings.
}
\label{tab:setting1}
\begin{tabular}{llrrr} \toprule
Dataset & Model & Batch Size & Init. Learning Rate & Number of Epochs \\ \midrule
CIFAR-10 & ResNet20 & 128 & 0.1 & 150 \\
CIFAR-100 & Wide-ResNet-28 & 128 & 0.1 & 200 \\
Oxford\_Flowers102 & Wide-ResNet-16 & 40 & 0.1 & 100 \\
Google Speech & CNN & 128 & 0.1 & 10 \\
CIFAR-100 & ViT-b16 & 40 & 0.01 & 20 \\
Tiny-ImageNet & ResNet50 & 256 & 0.1 & 200 \\
\bottomrule
\end{tabular}
\end{table}

\begin{table}[h!]
\footnotesize
\centering
\caption{
    The optimizer-specific hyper-parameter settings.
}
\label{tab:setting2}
\begin{tabular}{ll} \toprule
Optimizer & Settings \\ \midrule
SGD & momentum = $0.9$ \\
SAM & $r=0.1$ \\
Generalized SAM & $r=0.1$, $\alpha = 0.7 \sim 0.9$ \\
ESAM & $r = 0.1$, $\beta = 0.6$, $\gamma = 0.6 \sim 1$ \\
LookSAM & $r=0.1$, gradient ascent reuse interval = 2 \\
MESA & $\beta = 0.995$, $\lambda = 0.8$, $\tau = 1.5$, start epoch = 5\\ 
AE-SAM & $r=0.1$, $\lambda_1 = -1$, $\lambda_2=1$, $\epsilon=0.9$\\
Asynchronous SAM & $r=0.05 \sim 0.1$, $\tau=1$, $\frac{b}{b'} = \{25\%, 50\%, 75\%, 100\% \}$\\
\bottomrule
\end{tabular}
\end{table}

\subsection {Theoretical Analysis}
We first present a couple of useful lemmas here.
Note that our analysis borrows the proof structure used in \cite{andriushchenko2022towards}.
\begin{lemma} \label{lemma:async_1}
Given a $\beta$-smooth loss function $L(x)$, we have the following bound for any $x \in \mathbb{R}^d$.
\begin{align}
    \langle \nabla L(u) - \nabla L(v), u - v \rangle & \geq -\beta \| u - v \|^2. \nonumber
\end{align}
\end{lemma}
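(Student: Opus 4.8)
The plan is to obtain this inequality directly from the $\beta$-smoothness hypothesis (Assumption 1) together with the Cauchy--Schwarz inequality, with no appeal to convexity. First I would lower-bound the inner product by the pessimistic side of Cauchy--Schwarz: for any $a, b \in \mathbb{R}^d$, $\langle a, b \rangle \geq -|\langle a, b \rangle| \geq -\|a\|\,\|b\|$. Applying this with $a = \nabla L(u) - \nabla L(v)$ and $b = u - v$ gives
\[
\langle \nabla L(u) - \nabla L(v),\, u - v \rangle \;\geq\; -\,\|\nabla L(u) - \nabla L(v)\|\,\|u - v\|.
\]

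Next I would invoke Assumption 1, namely $\|\nabla L(u) - \nabla L(v)\| \leq \beta \|u - v\|$, to replace the first factor on the right-hand side, obtaining
\[
\langle \nabla L(u) - \nabla L(v),\, u - v \rangle \;\geq\; -\beta\,\|u - v\|\,\|u - v\| \;=\; -\beta\,\|u - v\|^2,
\]
which is exactly the claimed bound. Since each step is valid for arbitrary $u, v \in \mathbb{R}^d$, the lemma follows without further restrictions on $u$ and $v$.

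There is essentially no real obstacle: the statement is a one-step corollary of the Lipschitz-gradient definition combined with Cauchy--Schwarz. The only point requiring a moment of care is the orientation of the estimates — we want a \emph{lower} bound, so we must use the sign-pessimistic form of Cauchy--Schwarz rather than the customary upper bound — but this is routine. One could alternatively route the argument through the descent lemma and co-coercivity-type inequalities, but that would be strictly more work than is needed for this one-sided bound, so I would not pursue it.
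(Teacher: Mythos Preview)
Your proposal is correct and matches the paper's approach: both arguments combine the $\beta$-smoothness bound $\|\nabla L(u)-\nabla L(v)\|\le\beta\|u-v\|$ with Cauchy--Schwarz and nothing more. The only cosmetic difference is that the paper first bounds $\langle \nabla L(u)-\nabla L(v),\,v-u\rangle \le \beta\|u-v\|^2$ and then multiplies by $-1$, whereas you apply the sign-pessimistic form of Cauchy--Schwarz directly; the content is identical.
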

\begin{proof}
Starting from the smoothness assumption,
\begin{align}
     \| \nabla L(u) - \nabla L(v) \| \leq \beta \| u - v \| \text{ for all } u \text{ and } v \in \mathbb{R}^d \nonumber
\end{align}
By multiplying $\| v - u \|$ on the both side, we get
\begin{align}
    \| \nabla L(u) - \nabla L(v) \| \| v - u \| & \leq \beta \| u - v \| \| v - u \| \nonumber \\
    \| \nabla L(u) - \nabla L(v) \| \| v - u \| & \leq \beta \| u - v \|^2 \nonumber \\
    \langle \nabla L(u) - \nabla L(v), v - u \rangle & \leq \beta \| u - v \|^2, \label{eq:async_CS1}
\end{align}
where (\ref{eq:async_CS1}) is based on Cauchy-Schwarz inequality.
Then, by multiplying -1 on both sides,
\begin{align}
    - \langle \nabla L(u) - \nabla L(v), v - u \rangle & \geq -\beta \| u - v \|^2 \nonumber \\
    \langle \nabla L(u) - \nabla L(v), u - v \rangle & \geq -\beta \| u - v \|^2 \nonumber
\end{align}
\end{proof}

\begin{lemma} \label{lemma:async_2}
Given a $\beta$-smooth loss function $L(x)$, we have the following bound for any $r > 0$ and $x \in \mathbb{R}^d$.
\begin{align}
    \langle \nabla L(w_t + r \nabla L(w_{t-\tau})), \nabla L(w_t) \rangle \nonumber \geq \| \nabla L(w_t) \|^2 - r\beta \| \nabla L(w_{t-\tau}) \|^2 \nonumber
\end{align}
\end{lemma}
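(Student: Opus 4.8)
The plan is to peel off the main term $\|\nabla L(w_t)\|^2$ and control what remains by the Lipschitz-gradient property. First I would write the trivial identity
\[
\langle \nabla L(w_t + r\nabla L(w_{t-\tau})), \nabla L(w_t)\rangle = \|\nabla L(w_t)\|^2 + \langle \nabla L(w_t + r\nabla L(w_{t-\tau})) - \nabla L(w_t),\, \nabla L(w_t)\rangle ,
\]
so that it only remains to lower-bound the cross term on the right.

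For the cross term I would apply the Cauchy--Schwarz inequality in the form $\langle a,b\rangle \ge -\|a\|\,\|b\|$ with $a = \nabla L(w_t + r\nabla L(w_{t-\tau})) - \nabla L(w_t)$ and $b = \nabla L(w_t)$. Since the two points $w_t + r\nabla L(w_{t-\tau})$ and $w_t$ differ exactly by the displacement $r\nabla L(w_{t-\tau})$, Assumption~1 (equivalently, the bound already recorded as Lemma~\ref{lemma:async_1} specialized to this pair of points) gives $\|a\| \le \beta\|r\nabla L(w_{t-\tau})\| = r\beta\|\nabla L(w_{t-\tau})\|$. Combining the two estimates, the cross term is at least $-r\beta\|\nabla L(w_{t-\tau})\|\,\|\nabla L(w_t)\|$, and a final norm bound on the trailing factor $\|\nabla L(w_t)\|$ (using the bounded-gradient-norm Assumption~3, or a Young-type split) converts this mixed product into the stated $-r\beta\|\nabla L(w_{t-\tau})\|^2$ form.

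I expect the only delicate point to be bookkeeping rather than any real obstacle: one must make sure the displacement fed into the smoothness bound is $r\nabla L(w_{t-\tau})$ so that the constant comes out as exactly $r\beta$, and that the conversion of $\|\nabla L(w_{t-\tau})\|\,\|\nabla L(w_t)\|$ into $\|\nabla L(w_{t-\tau})\|^2$ is done with the same assumption that the ambient theorem relies on, so the constants line up when this pointwise inequality is later summed over $t$ inside the descent-lemma argument. No recursion or telescoping is needed at this stage; it is a one-shot application of Cauchy--Schwarz together with Lipschitz continuity of $\nabla L$.
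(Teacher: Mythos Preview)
Your opening decomposition is exactly the paper's first line. The genuine gap is in your last step: after Cauchy--Schwarz plus smoothness you arrive at the lower bound $-r\beta\,\|\nabla L(w_{t-\tau})\|\,\|\nabla L(w_t)\|$, and you then propose to turn this mixed product into $\|\nabla L(w_{t-\tau})\|^2$ via Assumption~3 or a Young split. Neither does the job. Assumption~3 only gives $\|\nabla L(w_t)\|\le G$, producing $-r\beta G\,\|\nabla L(w_{t-\tau})\|$, which is not the stated $-r\beta\,\|\nabla L(w_{t-\tau})\|^2$; and Young's inequality $ab\le\tfrac12(a^2+b^2)$ yields $-\tfrac{r\beta}{2}\bigl(\|\nabla L(w_{t-\tau})\|^2+\|\nabla L(w_t)\|^2\bigr)$, which both changes the constant and consumes half of the leading $\|\nabla L(w_t)\|^2$ term. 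To recover the lemma \emph{as written} from your bound you would need $\|\nabla L(w_t)\|\le\|\nabla L(w_{t-\tau})\|$, which is nowhere assumed.

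The paper avoids producing a mixed product in the first place: instead of bounding $\|a\|$ and $\|b\|$ separately, it rewrites the cross term as $\tfrac{1}{r}\langle \nabla L(u)-\nabla L(v),\, r\nabla L(w_t)\rangle$ with $u=w_t+r\nabla L(w_{t-\tau})$, $v=w_t$, and then invokes Lemma~\ref{lemma:async_1}, namely $\langle\nabla L(u)-\nabla L(v),\,u-v\rangle\ge-\beta\|u-v\|^2$, to land directly on $-r\beta\,\|\nabla L(w_{t-\tau})\|^2$. You should note, though, that $u-v=r\nabla L(w_{t-\tau})$ while the second slot of the paper's inner product is $r\nabla L(w_t)$, so the appeal to Lemma~\ref{lemma:async_1} is only literally justified when $\tau=0$; for $\tau>0$ the paper's own step has exactly the mismatch you are trying to patch, and the inequality as stated does not follow from Assumptions~1--3 without an additional argument.
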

\begin{proof}
\begin{align}
    \langle \nabla L(w_t + r \nabla L(w_{t-\tau})), \nabla L(w_t) \rangle \nonumber &= \langle \nabla L(w_t + r \nabla L(w_{t-\tau})) - \nabla L(w_t), \nabla L(w_t) \rangle + \| \nabla L(w_t) \|^2 \nonumber \\
    &= \frac{1}{r} \langle \nabla L(w_t + r \nabla L(w_{t-\tau})) - \nabla L(w_t), r \nabla L(w_t) \rangle + \| \nabla L(w_t) \|^2 \nonumber \\
    & \geq - \frac{\beta}{r} \| r \nabla L(w_{t-\tau}) \|^2 + \| \nabla L(w_t) \|^2 \label{eq:async_uselm1} \\
    & \geq - r\beta \| \nabla L(w_{t-\tau}) \|^2 + \| \nabla L(w_t) \|^2 \nonumber
\end{align}
where (\ref{eq:async_uselm1}) is based on Lemma \ref{lemma:async_1}.
\end{proof}

\begin{lemma}
\label{lemma:async_dot}
We consider the classical SAM which uses the same mini-batch when calculating the gradient ascent and the gradient descent.
Then, given a $\beta$-smooth loss function $L(x)$, we have the following bound for any $r > 0$, any $0 \leq \kappa \leq 1$, and $x \in \mathbb{R}^d$.
\begin{align}
    \mathbb{E} \left[ \langle \nabla L_{t+1}(w_t + r \nabla L_{t-\tau+1}(w_{t-\tau})), \nabla L(w_t) \rangle \right] \geq \frac{1}{2} \| \nabla L(w_t) \|^2 - r\beta \| \nabla L(w_{t-\tau}) \|^2 - \frac{\beta^2 r^2 \sigma^2}{2b}. \nonumber
\end{align}
\end{lemma}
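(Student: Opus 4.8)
The plan is to decompose the inner product into a ``mean'' term that we can control via Lemma~\ref{lemma:async_2} plus a ``noise'' term coming from the fact that we evaluate the stochastic gradient $\nabla L_{t+1}$ rather than the full gradient $\nabla L$. Concretely, I would write
\begin{align}
    \langle \nabla L_{t+1}(\hat{w}_t), \nabla L(w_t) \rangle = \langle \nabla L(\hat{w}_t), \nabla L(w_t) \rangle + \langle \nabla L_{t+1}(\hat{w}_t) - \nabla L(\hat{w}_t), \nabla L(w_t) \rangle, \nonumber
\end{align}
where $\hat{w}_t = w_t + r \nabla L_{t-\tau+1}(w_{t-\tau})$ is the perturbed iterate. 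The first term is handled directly: applying Lemma~\ref{lemma:async_2} with the perturbation direction $\nabla L_{t-\tau+1}(w_{t-\tau})$ (treating it as a fixed vector, since the lemma holds for arbitrary $x \in \mathbb{R}^d$) gives a lower bound of $\|\nabla L(w_t)\|^2 - r\beta \|\nabla L_{t-\tau+1}(w_{t-\tau})\|^2$. Here I should be a little careful that the statement we want is phrased with $\|\nabla L(w_{t-\tau})\|^2$, so I would need the mini-batch/deterministic correspondence implicit in the ``classical SAM uses the same mini-batch'' hypothesis — i.e., the ascent and descent at step $t-\tau$ share randomness in a way that lets us pass from $\nabla L_{t-\tau+1}$ back to $\nabla L$; alternatively one absorbs this into the bounded-variance bookkeeping.

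Next I would take expectations. The cross term $\mathbb{E}[\langle \nabla L_{t+1}(\hat{w}_t) - \nabla L(\hat{w}_t), \nabla L(w_t)\rangle]$ does not vanish in general, because $\hat{w}_t$ depends on the same mini-batch that defines $\nabla L_{t+1}$ — this is precisely the subtlety the hypothesis ``classical SAM which uses the same mini-batch'' is flagging. The standard device (following the SAM analysis in \cite{andriushchenko2022towards}) is to bound it via Young's inequality: $\langle a, b\rangle \ge -\frac{1}{2c}\|a\|^2 - \frac{c}{2}\|b\|^2$ for a well-chosen $c$. Choosing $c$ so that the $\|\nabla L(w_t)\|^2$ penalty is exactly $\tfrac12\|\nabla L(w_t)\|^2$ (which is why the $\tfrac12$ appears in the statement rather than a full $1$), the remaining term is $-\frac{1}{2c}\mathbb{E}\|\nabla L_{t+1}(\hat{w}_t) - \nabla L(\hat{w}_t)\|^2$, i.e. proportional to a one-batch variance at the perturbed point. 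By $\beta$-smoothness, $\|\nabla L_{t+1}(\hat{w}_t) - \nabla L(\hat{w}_t)\|$ compares to $\|\nabla L_{t+1}(w_t) - \nabla L(w_t)\|$ up to a $\beta\|\hat{w}_t - w_t\| = \beta r \|\nabla L_{t-\tau+1}(w_{t-\tau})\|$ perturbation; combined with Assumption~2 this yields a bound of order $\frac{\beta^2 r^2}{b}\sigma^2$ (the mini-batch of size $b$ reduces the variance by $1/b$), which after folding the constant $c$ into the bookkeeping gives the claimed $\frac{\beta^2 r^2 \sigma^2}{2b}$ term. The parameter $\kappa \in [0,1]$ in the statement is presumably a slack knob from the Young's-inequality split that can be set to a convenient value.

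Putting the pieces together: $\mathbb{E}[\langle \nabla L_{t+1}(\hat{w}_t), \nabla L(w_t)\rangle] \ge \|\nabla L(w_t)\|^2 - r\beta\|\nabla L(w_{t-\tau})\|^2 - \tfrac12\|\nabla L(w_t)\|^2 - \frac{\beta^2 r^2 \sigma^2}{2b}$, and collecting the $\|\nabla L(w_t)\|^2$ terms gives the result. The main obstacle I anticipate is the non-independence issue: making the cross-term argument rigorous requires being precise about the filtration — what is measurable when — and about how the ``same mini-batch'' assumption lets us either (a) split off a genuinely mean-zero part plus a variance remainder, or (b) bound the whole cross term by a variance quantity via Young. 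A secondary, more mechanical obstacle is tracking the smoothness-induced shift from the perturbed point $\hat{w}_t$ back to $w_t$ cleanly enough that no extra $G^2$ term leaks in at this stage (any $G^2$ contribution is deferred to the outer theorem, where it multiplies $4\beta^2 r^2$).
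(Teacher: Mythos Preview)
Your ingredients are right but you apply them to the wrong decomposition, and the constant you claim does not follow. The ``same mini-batch'' phrasing in the hypothesis is misleading: in the proof the outer batch $t{+}1$ is treated as independent of the ascent batch $t{-}\tau{+}1$ and of $w_t$, so by unbiasedness $\mathbb{E}[\langle \nabla L_{t+1}(\hat w_t),\nabla L(w_t)\rangle]=\mathbb{E}[\langle \nabla L(\hat w_t),\nabla L(w_t)\rangle]$ outright. Your cross term therefore vanishes in expectation and is \emph{not} the source of the $\tfrac12$ or of the variance penalty. And even if you did bound it by Young, Assumption~2 gives $\mathbb{E}\|\nabla L_{t+1}(\hat w_t)-\nabla L(\hat w_t)\|^2\le\sigma^2/b$ directly at any point $\hat w_t$, with no $\beta^2 r^2$ factor; your smoothness ``shift to $w_t$'' can only add terms, it cannot manufacture that factor.

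The paper instead isolates the stochasticity in the \emph{perturbation direction}. It introduces the auxiliary point $\tilde w_t=w_t+r\nabla L(w_{t-\tau})$ (full-gradient ascent) and splits $\langle\nabla L(\hat w_t),\nabla L(w_t)\rangle$ into $E_1=\langle\nabla L(\hat w_t)-\nabla L(\tilde w_t),\nabla L(w_t)\rangle$ and $E_2=\langle\nabla L(\tilde w_t),\nabla L(w_t)\rangle$. Lemma~\ref{lemma:async_2} applied to $E_2$ with the deterministic direction yields $\|\nabla L(w_t)\|^2-r\beta\|\nabla L(w_{t-\tau})\|^2$ cleanly, sidestepping your acknowledged difficulty of converting $\|\nabla L_{t-\tau+1}(w_{t-\tau})\|^2$ back to the full-gradient norm. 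Young's inequality goes on $E_1$, where smoothness gives $\|\nabla L(\hat w_t)-\nabla L(\tilde w_t)\|^2\le\beta^2\|\hat w_t-\tilde w_t\|^2=\beta^2 r^2\|\nabla L_{t-\tau+1}(w_{t-\tau})-\nabla L(w_{t-\tau})\|^2$; \emph{this} distance between the stochastic and deterministic perturbation points is where the $\beta^2 r^2$ factor legitimately appears, after which bounded variance gives $\beta^2 r^2\sigma^2/b$. The parameter $\kappa$ plays no role in the paper's proof.
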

\begin{proof}
We first define the layer-wise gradient ascent step $\Tilde{w}_t = w_t + r\nabla L(w_{t-\tau})$.
\begin{align}
    \mathbb{E} \left[ \langle \nabla L_{t+1}(w_t + r \nabla L_{t-\tau+1}(w_{t-\tau})), \nabla L(w_t) \rangle \right] &= \mathbb{E} \left[ \langle \nabla L(w_t + r \nabla L_{t-\tau+1}(w_{t-\tau})) , \nabla L(w_t) \rangle \right] \nonumber \\
    &= \mathbb{E} \left[ \langle \nabla L(w_t + r \nabla L_{t-\tau+1}(w_{t-\tau})) - \nabla L(\Tilde{w}_t) + \nabla L(\Tilde{w}_t), \nabla L(w_t) \rangle \right] \nonumber \\
    &= \underset{E_1}{\underbrace{ \mathbb{E} \left[ \langle \nabla L(w_t + r \nabla L_{t-\tau+1}(w_{t-\tau})) - \nabla L(\Tilde{w}_t), \nabla L(w_t) \rangle \right] }} \nonumber \\
    &\quad + \underset{E_2}{\underbrace{ \mathbb{E} \left[ \langle \nabla L(\Tilde{w}_t), \nabla L(w_t) \rangle \right] }}. \nonumber
\end{align}
Then, we will bound $E_1$ and $E_2$ separately.
First, $E_1$ is lower-bounded as follows.
\begin{align}
    E_1 &= \mathbb{E} \left[ \langle \nabla L(w_t + r \nabla L_{t-\tau+1}(w_{t-\tau})) - \nabla L(\Tilde{w}_t), \nabla L(w_t) \rangle \right] \nonumber \\
    &\geq -\frac{1}{2} \mathbb{E}\left[ \| \nabla L(w_t + r \nabla L_{t-\tau+1}(w_{t-\tau})) - \nabla L(\Tilde{w}_t) \|^2 \right] - \frac{1}{2} \mathbb{E} \left[ \| \nabla L(w_t) \|^2 \right] \nonumber \\
    &\geq -\frac{\beta^2}{2} \mathbb{E}\left[ \| w_t + r \nabla L_{t-\tau+1}(w_{t-\tau}) - \Tilde{w}_t \|^2 \right] - \frac{1}{2} \mathbb{E} \left[ \| \nabla L(w_t) \|^2 \right] \label{eq:async_beta} \\
    &= -\frac{\beta^2}{2} \mathbb{E}\left[ \| r \nabla L_{t-\tau+1}(w_{t-\tau}) - r \nabla L(w_{t-\tau}) \|^2 \right] - \frac{1}{2} \mathbb{E} \left[ \| \nabla L(w_t) \|^2 \right] \nonumber \\
    &\geq -\frac{\beta^2 r^2 \sigma^2}{2b} - \frac{1}{2} \mathbb{E} \left[ \| \nabla L(w_t) \|^2 \right], \label{eq:async_sigma}
\end{align}
where (\ref{eq:async_beta}) is based on the smoothness assumption. The final equality, (\ref{eq:async_sigma}), is based on the bounded variance assumption.
Then, $E_2$ is lower-bounded directly based on Lemma \ref{lemma:async_2} as follows.
\begin{align}
    E_2 &= \mathbb{E} \left[ \langle \nabla L(\Tilde{w}_t), \nabla L(w_t) \rangle \right] \geq \| \nabla L(w_t) \|^2 - r\beta \| \nabla L(w_{t-\tau}) \|^2. \nonumber
\end{align}
Summing up these bounds, we have
\begin{align}
    \mathbb{E} \left[ \langle \nabla L_{t+1}(w_t + r \nabla L_{t-\tau+1}(w_{t-\tau})), \nabla L(w_t) \rangle \right] &= E_1 + E_2 \nonumber \\
    &\geq \frac{1}{2} \| \nabla L(w_t) \|^2 - r\beta \| \nabla L(w_{t-\tau}) \|^2 - \frac{\beta^2 r^2 \sigma^2}{2b}. \nonumber
\end{align}
\end{proof}

\begin{lemma}
\label{lemma:async_frame}
Under the assumption of $\beta$ smoothness and the bounded variance, the SAM guarantees the following if $\eta \leq \frac{1}{\beta}$.
\begin{align}
    \mathbb{E}\left[ L(w_{t+1}) \right] \leq \mathbb{E} \left[ L(w_t) \right] -\frac{\eta}{2} \left\| \nabla L(w_t) \right\|^2 + \left( \frac{\eta\beta^2r^2}{b'} + \frac{4\eta\beta^2 r^2 + \eta^2\beta}{2b} \right) \sigma^2 + 2\eta\beta^2r^2 G^2. \label{eq:async_lemma4}
\end{align}
\begin{proof}
Let us first define the model updated with the gradient ascent as $w_{t+1/2} = w_t + r\nabla L_{t+1-\tau}(w_{t-\tau})$.
From the smoothness assumption, we begin with the following condition.
\begin{align}
    L(w_{t+1}) \leq L(w_t) - \eta \langle \nabla L_{t+1}(w_{t+1/2}), \nabla L(w_t) \rangle + \frac{\eta^2 \beta}{2} \| \nabla L_{t+1}(w_{t+1/2}) \|^2. \nonumber
\end{align}
Taking the expectation on both sides, we have
\begin{align}
    \mathbb{E}\left[ L(w_{t+1}) \right] & \leq \mathbb{E} \left[ L(w_t) \right] - \eta \mathbb{E} \left[ \langle \nabla L(w_{t+1/2}), \nabla L(w_t) \rangle \right] + \frac{\eta^2 \beta}{2}\mathbb{E} \left[ \| \nabla L_{t+1}(w_{t+1/2}) \|^2 \right] \nonumber \\
    &= \mathbb{E} \left[ L(w_t) \right] - \frac{\eta}{2} \mathbb{E} \left[ \left\| \nabla L(w_{t+1/2}) \right\|^2 + \left\| \nabla L(w_t) \right\|^2 - \underset{E_1}{\underbrace{ \left\| \nabla L(w_{t+1/2}) - \nabla L(w_t) \right\|^2 }} \right] \label{eq:async_frame} \\
    &\quad + \frac{\eta^2 \beta}{2} \underset{E_2}{\underbrace{\mathbb{E} \left[ \| \nabla L_{t+1}(w_{t+1/2}) \|^2 \right] }} \nonumber
\end{align}
Now, we focus on bounding $E_1$ and $E_2$, separately.
\\
\textbf{Bounding $E_1$.}
\begin{align}
E_1 &= \left\| \nabla L(w_{t+1/2}) - \nabla L(w_t) \right\|^2 \nonumber\\
&\leq \beta^2 \left\| w_{t+1/2} - w_t \right\|^2 \nonumber \\
& = \beta^2 \left\| w_t + r\nabla L_{t+1-\tau}(w_{t-\tau}) - w_t \right\|^2 \nonumber\\
& = \beta^2r^2 \left\| \nabla L_{t+1-\tau}(w_{t-\tau}) \right\|^2 \nonumber\\
& = \beta^2r^2 \left\| \nabla L_{t+1-\tau}(w_{t-\tau}) - \nabla L(w_{t-\tau}) + \nabla L(w_{t-\tau}) \right\|^2 \nonumber \\
&\leq 2\beta^2r^2 \left\| \nabla L_{t+1-\tau}(w_{t-\tau}) - \nabla L(w_{t-\tau}) \right\|^2 + 2\beta^2r^2 \left\| \nabla L(w_{t-\tau}) \right\|^2 \nonumber\\
&\leq \frac{2\beta^2r^2}{b'} \sigma^2 + 2\beta^2r^2 \left\| \nabla L(w_{t-\tau}) \right\|^2 \nonumber\\
&\leq \frac{2\beta^2r^2}{b'} \sigma^2 + 2\beta^2r^2 \left( \left\| \nabla L(w_{t-\tau}) - \nabla L_{t+1}(w_{t-\tau}) + \nabla L_{t+1}(w_{t-\tau}) \right\|^2 \right) \nonumber \\
&\leq \frac{2\beta^2r^2}{b'} \sigma^2 + 4\beta^2r^2 \left( \left\| \nabla L(w_{t-\tau}) - \nabla L_{t+1}(w_{t-\tau}) \right\|^2 + \left\| \nabla L_{t+1}(w_{t-\tau}) \right\|^2 \right) \nonumber \\
&\leq \frac{2\beta^2r^2}{b'} \sigma^2 + \frac{4\beta^2r^2}{b} \sigma^2 + 4\beta^2r^2 \left\| \nabla L_{t+1}(w_{t-\tau}) \right\|^2 \nonumber\\
&\leq \left( \frac{2\beta^2r^2}{b'} + \frac{4\beta^2r^2}{b} \right) \sigma^2 + 4\beta^2r^2 G^2 \label{eq:async_e1}
\end{align}
\\
\textbf{Bounding $E_2$.}
\begin{align}
E_2 &= \mathbb{E} \left[ \| \nabla L_{t+1}(w_{t+1/2}) \|^2 \right] \nonumber \\
    &= \mathbb{E} \left[ \| \nabla L_{t+1}(w_{t+1/2}) - \nabla L(w_{t+1/2}) + \nabla L(w_{t+1/2}) \|^2 \right] \nonumber \\
    &= \mathbb{E} \left[ \| \nabla L_{t+1}(w_{t+1/2}) - \nabla L(w_{t+1/2}) \|^2 + \| \nabla L(w_{t+1/2}) \|^2 \right] \nonumber \\
    &\quad + 2\left\langle \nabla L_{t+1}(w_{t+1/2}) - \nabla L(w_{t+1/2}), \nabla L(w_{t+1/2}) \right\rangle \nonumber \\
    &= \mathbb{E} \left[ \| \nabla L_{t+1}(w_{t+1/2}) - \nabla L(w_{t+1/2}) \|^2 + \| \nabla L(w_{t+1/2}) \|^2 \right] \label{eq:async_norm0} \\
    &= \frac{\sigma^2}{b} + \mathbb{E} \left[ \| \nabla L(w_{t+1/2}) \|^2 \right], \label{eq:async_e2}
\end{align}
where (\ref{eq:async_norm0}) is based on the gradient variance assumption.

By plugging in (\ref{eq:async_e1}) and (\ref{eq:async_e2}) into (\ref{eq:async_frame}), we have
\begin{align}
\mathbb{E}\left[ L(w_{t+1}) \right] & \leq \mathbb{E} \left[ L(w_t) \right] - \frac{\eta}{2} \left( \left\| \nabla L(w_{t+1/2}) \right\|^2 + \left\| \nabla L(w_t) \right\|^2 - \left( \frac{2\beta^2r^2}{b'} + \frac{4\beta^2r^2}{b} \right) \sigma^2 - 4\beta^2r^2 G^2 \right) \nonumber \\
&\quad + \frac{\eta^2\beta}{2} \left( \frac{\sigma^2}{b} + \mathbb{E} \left[ \| \nabla L(w_{t+1/2}) \|^2 \right] \right) \nonumber \\
&= \mathbb{E} \left[ L(w_t) \right] + \left(\frac{\eta^2\beta}{2} - \frac{\eta}{2} \right) \left\| \nabla L(w_{t+1/2}) \right\|^2 -\frac{\eta}{2} \left\| \nabla L(w_t) \right\|^2 \nonumber \\
&\quad + \left( \frac{\eta\beta^2r^2}{b'} + \frac{2\eta\beta^2r^2}{b} + \frac{\eta^2\beta}{2b} \right) \sigma^2 + 2\beta^2r^2\eta G^2 \nonumber \\
&= \mathbb{E} \left[ L(w_t) \right] -\frac{\eta}{2} \left\| \nabla L(w_t) \right\|^2 + \left( \frac{\eta\beta^2r^2}{b'} + \frac{4\eta\beta^2 r^2 + \eta^2\beta}{2b} \right) \sigma^2 + 2\eta\beta^2r^2 G^2, \label{eq:async_remove}
\end{align}
where (\ref{eq:async_remove}) holds if $\eta \leq \frac{1}{\beta}$.
\end{proof}
\end{lemma}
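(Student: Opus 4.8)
The plan is to establish (\ref{eq:async_lemma4}) as a one-step descent inequality for the asynchronous SAM update, built directly on Assumptions 1--3. Writing the perturbed iterate as $w_{t+1/2} = w_t + r\nabla L_{t+1-\tau}(w_{t-\tau})$ so that the update reads $w_{t+1} = w_t - \eta\nabla L_{t+1}(w_{t+1/2})$, I would start from the $\beta$-smoothness quadratic upper bound evaluated along the step:
\begin{align}
    L(w_{t+1}) \leq L(w_t) - \eta\langle \nabla L(w_t), \nabla L_{t+1}(w_{t+1/2})\rangle + \frac{\eta^2\beta}{2}\|\nabla L_{t+1}(w_{t+1/2})\|^2. \nonumber
\end{align}
Taking expectation and using that the descent minibatch at iteration $t+1$ is unbiased and independent of the iterate $w_{t+1/2}$ (which is measurable with respect to all randomness up to and including the ascent computation), the linear term becomes $-\eta\,\mathbb{E}[\langle \nabla L(w_t), \nabla L(w_{t+1/2})\rangle]$. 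Applying the polarization identity $2\langle a,b\rangle = \|a\|^2 + \|b\|^2 - \|a-b\|^2$ then splits this cross term into the two quantities $E_1 := \|\nabla L(w_{t+1/2}) - \nabla L(w_t)\|^2$ and $E_2 := \mathbb{E}[\|\nabla L_{t+1}(w_{t+1/2})\|^2]$, which I would bound separately.

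The crux of the argument is the bound on $E_1$, which is where the two distinct batch sizes $b'$ and $b$ and the norm constant $G$ all enter. First I would invoke smoothness to write $E_1 \leq \beta^2 r^2 \|\nabla L_{t+1-\tau}(w_{t-\tau})\|^2$, and then perform two successive add-subtract decompositions, each followed by Young's inequality $\|a+b\|^2 \leq 2\|a\|^2 + 2\|b\|^2$. The first decomposition, centered on the true gradient $\nabla L(w_{t-\tau})$, isolates the variance of the $b'$-sample ascent gradient (bounded by $\sigma^2/b'$ via Assumption 2) from $\|\nabla L(w_{t-\tau})\|^2$. The second decomposition rewrites $\|\nabla L(w_{t-\tau})\|^2$ around a fresh $b$-sample stochastic gradient $\nabla L_{t+1}(w_{t-\tau})$, yielding a further $\sigma^2/b$ variance term and a $\|\nabla L_{t+1}(w_{t-\tau})\|^2$ term bounded by $G^2$ through Assumption 3. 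Collecting constants gives $E_1 \leq (2\beta^2 r^2/b' + 4\beta^2 r^2/b)\sigma^2 + 4\beta^2 r^2 G^2$.

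Bounding $E_2$ is routine: decomposing $\nabla L_{t+1}(w_{t+1/2})$ into its conditional mean $\nabla L(w_{t+1/2})$ plus zero-mean noise, the cross term vanishes in expectation and the variance contributes $\sigma^2/b$, so $E_2 = \sigma^2/b + \mathbb{E}[\|\nabla L(w_{t+1/2})\|^2]$. Substituting both bounds back into the polarized inequality, the $\|\nabla L(w_{t+1/2})\|^2$ contributions collect with coefficient $\frac{\eta^2\beta}{2} - \frac{\eta}{2} = \frac{\eta}{2}(\eta\beta - 1)$, which is nonpositive precisely when $\eta \leq \frac{1}{\beta}$; dropping this nonpositive term preserves the upper bound and leaves $-\frac{\eta}{2}\|\nabla L(w_t)\|^2$ together with the variance and $G^2$ terms. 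Tallying the $\sigma^2$ coefficients, $\frac{\eta}{2}\cdot\frac{2\beta^2 r^2}{b'}$ gives the $\frac{\eta\beta^2 r^2}{b'}$ term, while $\frac{\eta}{2}\cdot\frac{4\beta^2 r^2}{b}$ and $\frac{\eta^2\beta}{2b}$ merge into $\frac{4\eta\beta^2 r^2 + \eta^2\beta}{2b}$, and $\frac{\eta}{2}\cdot 4\beta^2 r^2 G^2$ yields $2\eta\beta^2 r^2 G^2$, matching (\ref{eq:async_lemma4}).

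I expect the main obstacle to be the careful bookkeeping in the $E_1$ decomposition, where one must attach the correct batch size to each noise term — $b'$ for the genuine ascent variance and $b$ for the auxiliary descent variance introduced only to expose the bounded-norm constant — and verify that the doubled constants from the two Young steps accumulate to exactly $2\beta^2 r^2/b'$, $4\beta^2 r^2/b$, and $4\beta^2 r^2 G^2$. A secondary subtlety worth checking is the measurability claim used to pass from $\nabla L_{t+1}(w_{t+1/2})$ to its conditional mean in the linear term, which relies on the descent batch being drawn independently of the stale ascent batch used to form $w_{t+1/2}$.
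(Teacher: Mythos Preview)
Your proposal is correct and follows essentially the same route as the paper: the same perturbed iterate $w_{t+1/2}$, the same smoothness descent inequality, the same polarization split into $E_1$ and $E_2$, the same two-stage add--subtract/Young bound on $E_1$ (with $b'$ attached to the ascent variance and $b$ to the auxiliary term before invoking Assumption~3), and the same use of $\eta\le 1/\beta$ to discard the $\|\nabla L(w_{t+1/2})\|^2$ coefficient. The constants you arrive at match the paper's exactly.
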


\begin{theorem}
    Assume the $\beta$-smooth loss function and the bounded gradient variance. Then, if $\eta \leq \frac{1}{\beta}$, mini-batch SGD satisfies: 
\begin{align}
        \frac{1}{T} \sum_{t=0}^{T-1} \mathbb{E} \left[ \| \nabla L(w_t) \|^2 \right] \leq \frac{2}{T\eta} \left(L(w_0) - \mathbb{E}\left[ L(w_T) \right] \right) + \left( \frac{2\beta^2r^2}{b'} + \frac{4\beta^2 r^2 + \eta\beta}{b} \right) \sigma^2 + 4\beta^2r^2 G^2.
\end{align}
\begin{proof}
Based on Lemma \ref{lemma:async_frame}, by Averaging (\ref{eq:async_lemma4}) across $T$ iterates, we have
\begin{align}
    \frac{1}{T} \sum_{t=0}^{T-1} \mathbb{E}\left[ L(w_{t+1}) \right] &\leq \frac{1}{T}\sum_{t=0}^{T-1} \left( \mathbb{E} \left[ L(w_t) \right] -\frac{\eta}{2} \left\| \nabla L(w_t) \right\|^2 + \left( \frac{\eta\beta^2r^2}{b'} + \frac{4\eta\beta^2 r^2 + \eta^2\beta}{2b} \right) \sigma^2 + 2\eta\beta^2r^2 G^2 \right) \nonumber
\end{align}
Then, we can have a telescoping sum by rearranging the terms as follows.
\begin{align}
    \frac{\eta}{2T} \sum_{t=0}^{T-1} \mathbb{E} \left[ \| \nabla L(w_t) \|^2 \right] & \leq \frac{1}{T} \sum_{t=0}^{T-1} \left( \mathbb{E}\left[ L(w_t) \right] - \mathbb{E}\left[ L(w_{t+1}) \right] \right) + \left( \frac{\eta\beta^2r^2}{b'} + \frac{4\eta\beta^2 r^2 + \eta^2\beta}{2b} \right) \sigma^2 + 2\eta\beta^2r^2 G^2 \nonumber \\
    & = \frac{1}{T} \left( L(w_0) - \mathbb{E}\left[ L(w_{T}) \right] \right) + \left( \frac{\eta\beta^2r^2}{b'} + \frac{4\eta\beta^2 r^2 + \eta^2\beta}{2b} \right) \sigma^2 + 2\eta\beta^2r^2 G^2 \nonumber
\end{align}
Finally, by dividing both sides by $\frac{\eta}{2}$, we have 
\begin{align}
    \frac{1}{T} \sum_{t=0}^{T-1} \mathbb{E} \left[ \| \nabla L(w_t) \|^2 \right] \leq \frac{2}{T\eta} \left(L(w_0) - \mathbb{E}\left[ L(w_T) \right] \right) + \left( \frac{2\beta^2r^2}{b'} + \frac{4\beta^2 r^2 + \eta\beta}{b} \right) \sigma^2 + 4\beta^2r^2 G^2.
\end{align}
\end{proof}

\end{theorem}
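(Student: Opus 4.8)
The strategy is a one-step descent-lemma bound that attributes each source of error — evaluating the update gradient at the perturbed point rather than at $w_t$, perturbing with a \emph{stale} gradient, and perturbing with the \emph{small} batch $b'$ — to one of the three additive terms on the right of \eqref{eq:theorem1}, followed by a telescoping sum. Write the iterate as $w_{t+1} = w_t - \eta\,\nabla L_t^b(\hat w_t)$ with $\hat w_t = w_t + r\,\nabla L_{t-1}^{b'}(w_{t-1})$, which is the un-normalized reading of \eqref{eq:async2} and is what the stated constants reflect (with the genuine normalization the mismatch term below collapses to $\beta^2 r^2$).

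First I would apply Assumption~1 to obtain $L(w_{t+1}) \le L(w_t) - \eta\langle\nabla L(w_t),\nabla L_t^b(\hat w_t)\rangle + \tfrac{\eta^2\beta}{2}\|\nabla L_t^b(\hat w_t)\|^2$. Taking expectation over the fresh step-$t$ mini-batch — which is independent of $\hat w_t$, since $\hat w_t$ uses only data and parameters from steps $\le t-1$ — replaces $\nabla L_t^b(\hat w_t)$ by $\nabla L(\hat w_t)$ inside the inner product, and the polarization identity $\langle a,c\rangle = \tfrac12(\|a\|^2+\|c\|^2-\|a-c\|^2)$ splits it into the target $-\tfrac{\eta}{2}\|\nabla L(w_t)\|^2$, a harmful $+\tfrac{\eta}{2}\|\nabla L(\hat w_t)\|^2$, and a mismatch $+\tfrac{\eta}{2}\|\nabla L(\hat w_t)-\nabla L(w_t)\|^2$. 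For the last quadratic term in the descent lemma, splitting off the batch noise gives $\mathbb{E}\|\nabla L_t^b(\hat w_t)\|^2 = \sigma^2/b + \mathbb{E}\|\nabla L(\hat w_t)\|^2$ by Assumption~2. The $\|\nabla L(\hat w_t)\|^2$ contributions then combine with coefficient $\tfrac{\eta}{2}(\eta\beta-1) \le 0$ exactly when $\eta \le 1/\beta$, so they drop out — this is the sole use of the step-size hypothesis.

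The hard part will be bounding the mismatch term $\mathbb{E}\|\nabla L(\hat w_t)-\nabla L(w_t)\|^2$. By smoothness it is at most $\beta^2\|\hat w_t - w_t\|^2 = \beta^2 r^2 \|\nabla L_{t-1}^{b'}(w_{t-1})\|^2$, and I would peel the stochastic stale gradient apart with $\|a+c\|^2 \le 2\|a\|^2 + 2\|c\|^2$: writing $\nabla L_{t-1}^{b'}(w_{t-1}) = (\nabla L_{t-1}^{b'}(w_{t-1}) - \nabla L(w_{t-1})) + \nabla L(w_{t-1})$ makes the noise part contribute $\tfrac{2\beta^2 r^2}{b'}\sigma^2$ through Assumption~2 at batch $b'$, and the residual $\|\nabla L(w_{t-1})\|^2$ is controlled either directly by $G^2$ (since $\|\nabla L(w)\| \le \tfrac1n\sum_i\|\nabla l(w,\xi_i)\| \le G$ by Assumption~3) or by a second split against an independent stochastic gradient at $w_{t-1}$, which yields the extra $\tfrac{4\beta^2 r^2}{b}\sigma^2$ and $4\beta^2 r^2 G^2$ pieces appearing in the stated bound. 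Note only the \emph{magnitude} of the ascent gradient is used, which is why the staleness $\tau$ never enters \eqref{eq:theorem1}. Substituting back produces the per-step inequality $\mathbb{E}[L(w_{t+1})] \le \mathbb{E}[L(w_t)] - \tfrac{\eta}{2}\|\nabla L(w_t)\|^2 + \big(\tfrac{\eta\beta^2 r^2}{b'} + \tfrac{4\eta\beta^2 r^2 + \eta^2\beta}{2b}\big)\sigma^2 + 2\eta\beta^2 r^2 G^2$ (Lemma~\ref{lemma:async_frame}); at $t=0$ there is no perturbation, so the mismatch term vanishes and the same inequality holds.

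Finally I would sum this over $t=0,\dots,T-1$, telescope $\sum_t(\mathbb{E}[L(w_t)]-\mathbb{E}[L(w_{t+1})]) = L(w_0) - \mathbb{E}[L(w_T)]$, divide by $T$, and multiply through by $2/\eta$ to reach \eqref{eq:theorem1}. I expect the remaining care to lie in (a) fixing the filtration precisely so the independence claims above are legitimate and (b) deciding whether to keep or drop the normalization of the ascent direction, since only the un-normalized choice reproduces the stated constants.
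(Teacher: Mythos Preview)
Your proposal is correct and follows the paper's own argument essentially line for line: the smoothness descent inequality, polarization on the inner product, the variance split $\mathbb{E}\|\nabla L_t^b(\hat w_t)\|^2=\sigma^2/b+\|\nabla L(\hat w_t)\|^2$, using $\eta\le 1/\beta$ to drop the combined $\|\nabla L(\hat w_t)\|^2$ contribution, bounding the mismatch via smoothness plus two nested $\|a+c\|^2\le 2\|a\|^2+2\|c\|^2$ splits (exactly the paper's Lemma~\ref{lemma:async_frame}), and then telescoping. One cosmetic slip: the polarization term for $\|\nabla L(\hat w_t)\|^2$ is $-\tfrac{\eta}{2}$ (helpful), not $+\tfrac{\eta}{2}$; your combined coefficient $\tfrac{\eta}{2}(\eta\beta-1)$ is nonetheless correct.
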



\end{document}